\def\ps@pprintTitle{%
\def\@oddfoot{\footnotesize\fbox{\parbox{\dimexpr\textwidth-2\fboxsep-2\fboxrule\relax}{%
   This article has been accepted for publication in \textit{Computer Networks}. This is the author's accepted manuscript version. Copyright may transfer without notice.}}}
 \let\@evenhead\@empty
 \let\@oddhead\@empty
 \let\@evenfoot\@oddfoot}
\DeclareMathOperator*{\argmax}{arg\,max}
\theoremstyle{plain}
\newtheorem{theorem}{Theorem}
\newtheorem{lemma}{Lemma}
\newtheorem{corollary}{Corollary}
\newacronym{ib}{IB}{Information Bottleneck}
\newacronym{dnn}{DNN}{deep neural network}
\newacronym{cnn}{CNN}{convolutional neural network}
\newacronym{kl}{KL}{Kullback-Leibler}
\newacronym{dpi}{DPI}{Data Processing Inequality}
\newacronym{fgsm}{FGSM}{Fast Gradient Sign Method}
\newcommand{\fgsm}{\gls{fgsm}\xspace}
\newacronym{bim}{BIM}{Basic Iterative Method}
\newcommand{\bim}{\gls{bim}\xspace}
\newacronym{mim}{MIM}{Momentum Iterative Method}
\newcommand{\mim}{\gls{mim}\xspace}
\newacronym{pgd}{PGD}{Projected Gradient Descent}
\newcommand{\pgd}{\gls{pgd}\xspace}
\newacronym{nes}{NES}{Natural Evolutionary Search}
\newcommand{\nes}{\gls{nes}\xspace}
\newcommand{\natk}{N-Attack\xspace}
\newcommand{\satk}{Square Attack\xspace}
\newacronym{eatk}{EVO}{Evolutionary Attack}
\newcommand{\eatk}{\gls{eatk}\xspace}
\newacronym{sopt}{S-OPT}{Sign-OPT}
\newcommand{\sopt}{\gls{sopt}\xspace}
\newacronym{hsja}{HSJA}{Hop-Skip-Jump Attack}
\newcommand{\tatk}{Triangle Attack\xspace}
\newacronym{ec}{EC}{Edge Computing}
\newacronym{db}{DB}{Distillated Bottleneck}
\newacronym{sb}{SB}{Supervised Bottleneck}
\newacronym{es}{ES}{Entropic Student}
\newacronym{bf}{BF}{BottleFit}
\newacronym{jc}{JC}{JPEG Compression}
\newacronym{qt}{QT}{Quantization}
\newacronym{sota}{SOTA}{state-of-the-art}
\newacronym{pac}{PAC}{Probably Approximately Correct}
\newacronym{asr}{ASR}{attack success rate}
\newacronym{lof}{LOF}{Local Outlier Factor}
\newcommand{\dnn}{\gls{dnn}\xspace}
\newcommand{\dnns}{\glspl{dnn}\xspace}
\newcommand\blfootnote[1]{%
  \begingroup
  \renewcommand\thefootnote{}\footnote{#1}%
  \addtocounter{footnote}{-1}%
  \endgroup
}
\newcommand{\rev}[1]{\textcolor{black}{#1}}
\journal{Computer Networks}
\begin{document}


\begin{frontmatter}



\title{Adversarial Attacks to Latent Representations of Distributed Neural Networks in Split Computing}

\author[label1]{Milin Zhang}
\ead{zhang.mil@northeastern.edu}
\author[label1]{Mohammad Abdi}
\ead{abdi.mo@northeastern.edu}
\author[label2]{\\Jonathan Ashdown}
\ead{jonathan.ashdown@us.af.mil}
\author[label1]{Francesco Restuccia}
\ead{f.restuccia@northeastern.edu}

\affiliation[label1]{organization={Institute for the Wireless Internet of Things at Northeastern University},
            country={United States}}
            
\affiliation[label2]{organization={Air Force Research Laboratory},
            country={United States}}

\begin{abstract}
Distributed \glspl{dnn} have been shown to reduce the computational burden of mobile devices and decrease the end-to-end inference latency in edge computing scenarios\blfootnote{Approved for Public Release: Distribution Unlimited: AFRL-2025-2602.}. While distributed \glspl{dnn} have been studied, the resilience of distributed \glspl{dnn} to adversarial action remains an open problem. In this paper, we fill the existing research gap by rigorously analyzing the robustness of distributed \glspl{dnn} against adversarial action. We cast this problem in the context of information theory and rigorously proved that (i) the compressed latent dimension improves the robustness but also affect task-oriented performance; and (ii) the deeper splitting point enhances the robustness but also increases the computational burden. These two trade-offs provide a novel perspective to design robust distributed \dnn. To test our theoretical findings, we perform extensive experimental analysis by considering 6 different \gls{dnn} architectures, 6 different approaches for distributed \gls{dnn} and 10 different adversarial attacks using the ImageNet-1K dataset.
\end{abstract}



\begin{keyword}
Distributed \dnn \sep Adversarial Robustness \sep Split Computing


\end{keyword}

\date{}

\end{frontmatter}



\glsresetall

\section{Introduction}\label{intro}

\Glspl{dnn} have achieved significant success in various domains such as computer vision \cite{kirillov2023segment}, natural language processing \cite{openai2023gpt}, and wireless communication \cite{baldesi2022charm}, among many others. However, state-of-the-art \glspl{dnn} are challenging to deploy on resource-limited mobile devices. While mobile-specific \glspl{dnn} have been proposed \cite{sandler2018mobilenetv2}, they usually come with a significant loss in accuracy. On the other hand, completely offloading the computation to edge or cloud computers is impractical in mobile scenarios due to the excessive communication overhead corresponding to the transfer of the \gls{dnn} input from the mobile device to the edge/cloud \cite{wang2019edge}. A new paradigm called \textit{distributed computing} -- also referred to as \textit{split computing} in prior art \cite{matsubara2021split} -- divides the computation of \glspl{dnn} across multiple devices -- according to the available processing power and networking bandwidth. The key advantage is that optimal load distribution can be achieved while meeting maximum end-to-end latency constraints and also preserving the \gls{dnn} accuracy \cite{matsubara2022bottlefit}. 

Without loss of generality, we assume that a \gls{dnn} model is divided into a \emph{mobile DNN} and a \emph{local DNN}, respectively executed by the mobile device and an edge/cloud computer. Usually, the \gls{dnn} architecture is modified by introducing a compression layer (\textit{``bottleneck''}) at the end of the mobile DNN \cite{matsubara2019distilled,matsubara2020head}, which is trained to learn a latent representation that reduces the amount of data being sent to the edge/cloud. The compressed representation is then used by the local DNN to produce the final prediction output (e.g., classification). 

Although prior work has proven the advantages of distributing the \gls{dnn} computation, it is also evident that this approach opens the door to adversarial attacks to intermediate (latent) representations. The distributed nature of the computation exposes the latent representation to adversarial action. Indeed, due to the need for communicating the latent representation across devices over a wireless network, an adversary can easily eavesdrop the latent representation and craft an adversarial sample to compromise the local DNN as shown in \cref{fig:aml_sc}. 

\begin{figure}[t]
    \centering
    \includegraphics[width=\columnwidth]{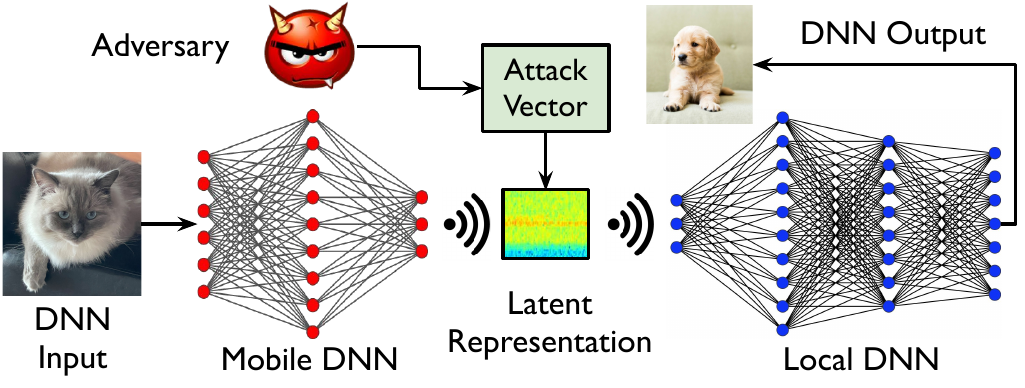}
    \caption{Overview of Adversarial Attacks to Distributed \glspl{dnn}.}
    \label{fig:aml_sc}
\end{figure}

Despite its significance and timeliness, assessing the robustness of distributed \glspl{dnn} remains an unexplored problem. We remark that achieving a fundamental understanding of these attacks and evaluating their effectiveness in state-of-the-art \glspl{dnn} is paramount to design robust distributed \dnns. To this end, we theoretically analyze the robustness of distributed \glspl{dnn} using \gls{ib} theory \cite{tishby2000information}. Our first key theoretical finding is that the latent robustness is highly depended on the depth of splitting point. With similar levels of information distortion, latent representations in deeper layers exhibit constantly better robustness compare to those in early layers. Our second key finding is that the trade-off between \dnn performance and robustness is intrinsically related to the dimension of the latent space. While a smaller latent space may increase robustness by reducing the model variance, it will also introduce bias in the model thus affecting the generalization capability of the \dnn model.

We extensively evaluate our theoretical findings by considering 10 adversarial algorithms, i.e., 4 white-box attacks \cite{goodfellow2014explaining,kurakin2018adversarial,dong2018boosting,madry2017towards} and 6 black-box attacks \cite{ilyas2018black,li2019nattack,andriushchenko2020square,dong2019efficient,cheng2019sign,wang2022triangle}. We apply these attacks to 6 different architectures \cite{simonyan2014very,he2016deep} designed with 6 distributed \gls{dnn} approaches \cite{eshratifar2019jointdnn, shao2020bottlenet++, matsubara2020head, matsubara2022bottlefit, singh2020end, matsubara2022supervised}. The experimental results validate our theoretical findings on the examined \glspl{dnn} and attack algorithms. 

\subsection*{Summary of Novel Contributions}

\noindent$\bullet$ We theoretically investigate the robustness of distributed \glspl{dnn} against adversarial action. We leverage notions of \gls{ib} theory and rigorously prove that the robustness of distributed \glspl{dnn} is affected by the splitting point and latent dimension. While existing work often optimize the splitting point and feature dimension to minimize end-to-end latency of distributed computing \cite{matsubara2022bottlefit,matsubara2022sc2}, for the first time we reveal that they are also key factors of robustness which need to be considered in designs;
 
\noindent$\bullet$ We perform extensive experiments with the ImageNet-1K \cite{deng2009imagenet} dataset, by considering 6 different \gls{dnn} architectures, 6 different distributed \gls{dnn} approaches under 10 different attacks to support our theoretical findings. The results show that the theoretical analysis applies to the experimental settings under consideration. We share our code for reproducibility at \url{https://github.com/Restuccia-Group/AdvLatent}, and we hope this work may open the door to a new field dedicated to studying the resilience of distributed \dnns. 

This paper is organized as follows. \cref{sec:model} define the threat model under consideration. Next, \cref{sec:theory} presents our theoretical analysis based on \gls{ib}. \cref{sec:exp-setup} discusses our experimental setup while \cref{sec:experiments} presents our experimental results. Finally, \cref{sec:rw} summarizes the related work and \cref{sec:conclusions} draws conclusions and discusses possible directions for future work.

\section{\rev{Overview of Distributed DNN}}\label{sec:overview}

\noindent\rev{\textbf{Distributed Neural Networks}. Deploying AI applications in mobile devices is challenging as there is no enough computation resources to execute large \dnns. Lightweight \dnns \cite{sandler2018mobilenetv2,tan2019mnasnet} have significant performance loss while offloading tasks to the edge device may incur in excessive latency \cite{yao2020deep}. Therefore, split computing, as an intermediate option, is proposed to accelerate large \dnns to resource-constrained devices \cite{kang2017neurosurgeon}. The key idea is to split a large \dnn into two parts -- a relative small head model deployed on the resource-limited device and a large tail model deployed on the edge device which has excessive computation power.}

\rev{``Partitioning optimization" and ``bottleneck optimization" represent two major research directions in split computing. Partitioning optimization focuses on selecting the optimal splitting depth within \dnns to achieve minimal inference latency under given computation and bandwidth constraints \cite{kang2017neurosurgeon,eshratifar2019jointdnn}. Splitting at early layers reduces computational overhead on the mobile device but increases communication burden, as latent representations in the initial layers of deep neural networks typically have larger dimensions. On the other hand, ``bottleneck optimization" aims to minimize the size of latent representations by introducing a ``bottleneck" layer before the split point to compress these representations \cite{matsubara2022bottlefit,eshratifar2019bottlenet}. This approach involves a critical trade-off: smaller bottlenecks may cause information loss during compression, degrading end-to-end performance, while larger latent dimensions incur unnecessary communication overhead. For a comprehensive review, readers are referred to \cite{zhang2025semantic}.}

\rev{While the depth and dimension of the partitioning layer are key design factors in split computing, existing research has overlooked how these parameters affect the adversarial robustness of split computing systems. In this work, we investigate the relationship between adversarial robustness in the latent space and both the dimension and depth of partitioning layers.} \smallskip

\noindent\rev{\textbf{Security and Privacy in Distributed DNNs}. Despite its effectiveness, this new cooperative paradigm of distributed \dnns introduces significant security and privacy concerns that require investigation. We roughly categorize the security and privacy issues in distributed \dnns based on the target device. On mobile devices, a critical issue is data privacy. Literature has demonstrated that adversaries can extract private input information of the mobile device from the intermediate outputs \cite{he2019model}. In addition, research has shown that poisoning input data can compromise data compression mechanisms at the mobile side, creating excessive transmission overhead \cite{zhang2024resilience}. Conversely, we focus on a new security issue that targets the tail model deployed on the edge device.}

\section{Threat Model}\label{sec:model}

\begin{figure}[t]
    \centering
    \includegraphics[width=\columnwidth]{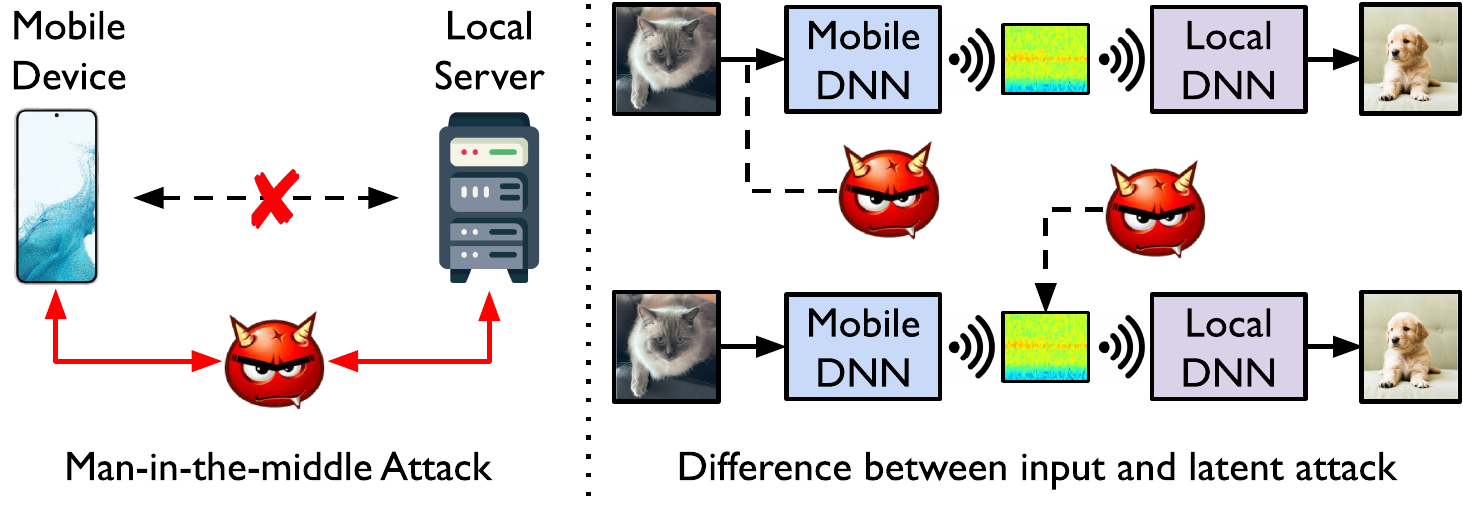}
    \caption{Threat model under consideration. (Left) The adversary plays a man-in-the-middle attack where the communication between mobile and local devices are altered without detection; (Right) Difference between adversarial attack in input space and adversarial attack in latent space.}
    \label{fig:threat_model}
\end{figure}

\noindent\textbf{Overview of the Adversarial Scenario.}~\cref{fig:threat_model} shows the threat model under consideration. We consider a man-in-the-middle attack to distributed \dnns where the mobile device and local server are connected through an insecure network. In this scenario, attackers can intercept transmissions and modify communications between mobile and edge endpoints without detection. More specifically, the attacker can add unperceptive perturbations to latent representations to mislead the local \dnn at the edge, compromising the task-oriented performance of the distributed computing system. This attack differs fundamentally from traditional adversarial attacks in input space as latent representations often contains richer semantic information (e.g. objects and shapes) and less task-irrelevant information (e.g., background noise) \cite{shwartz2017opening}, resulting in distinctive robustness characteristics compared to raw input data.

Analog to conventional adversarial attacks in the input space, we consider $l_p$ bounded attacks in our threat model. Remark that while $l_p$ constraints are first proposed to model the human invisibility in vision tasks~\cite{simonyan2014very}, it has been widely applied in diverse tasks that operate without human involvement, such as cybersecurity~\cite{rosenberg2021adversarial} and wireless communication~\cite{adesina2022adversarial}. This is because the $l_p$ distortion represents a worst-case scenario for resilience. Thus, we believe $l_p$ bounded attacks is an essential starting point to assess the robustness of distributed \dnns. 

\noindent\textbf{Adversarial Attacks in Input Space.}~Let $f: \mathbb{R}^{d} \mapsto \mathbb{C}^{k}$ denote a \dnn where $\mathbb{R}$ and $\mathbb{C}$ are respectively the input and output space, and $d$ and $k$ are the corresponding dimension of these two spaces. The \dnn will assign highest score to the correct class $y = \argmax_{k}f(x)$ for each input $x$. The adversarial goal is to introduce a perturbation $\delta_d \in \mathbb{R}^d$ to the original sample so that
\begin{equation}
    \argmax_{k=1,...,K}f(x+\delta_d) \neq y,
\end{equation}
where $||\delta_d||_{p} \leq \sigma$ and $\sigma$ is the distance constraint under different $l_{p}$ norm. Additionally, for visual applications, $\delta_d$ should satisfy the condition $x + \delta_d \in [0, 1]^d$ as there is an explicit upper and lower bound for red, green and blue (RGB) value in digital images.

\noindent\textbf{Adversarial Attacks in Latent Space.}~Let $g:\mathbb{R}^d \mapsto \mathbb{H}^t$ and $f:\mathbb{H}^t \mapsto \mathbb{C}^k$ denote the mobile \dnn and local \dnn, where $\mathbb{H}$ and $t$ are the latent space and its associated dimension, respectively. For each input $x$, the mobile \dnn will generate a corresponding latent representation $g(x) \in \mathbb{H}^t$ and the local \dnn will generate output $y = arg\max_{k}f(g(x))$ by taking the latent representation as input. Adversarial action in latent space adds a perturbation $\delta_t \in \mathbb{H}^t$ such that
\begin{equation}
    \argmax_{k=1,...,K}f(g(x)+\delta_t) \neq y,
\end{equation}
where $||\delta_t||_p \leq \sigma$ is the distance constraint under $l_p$ norm. We remark that the latent representations are model-dependent and there is no explicit bound for their value other than their computer-level representation (e.g., float, integer, double). 

\section{Theoretical Analysis}\label{sec:theory}

\subsection{Analysis through Information Bottleneck}
\begin{figure}[t]
    \centering
    \includegraphics[width=\columnwidth]{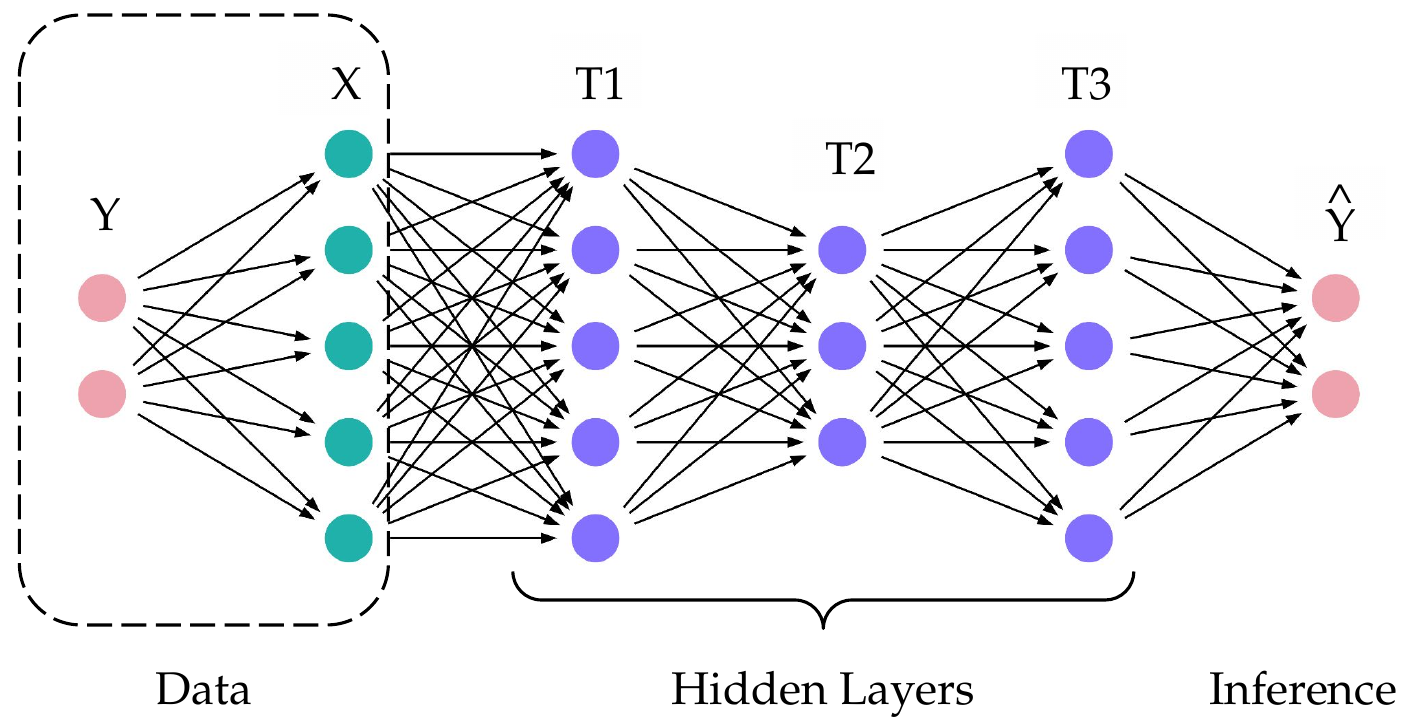}
    \caption{Modeling DNN with IB. Each representation $T_i$ only depends on the previous output $T_{i-1}$, and the optimal $T^{*}_{i}$ can be interpreted as the IB which optimizes \cref{eqn:ib} at layer $i$.}
    \label{fig:ibdnn}
\end{figure}
The \acrfull{ib} is a model-agnostic information-theoretical framework introduced by \cite{tishby2000information} to extract the relevant information about a random variable (r.v.) $Y$ from another r.v.~$X$ by finding a representation $T$ which compresses the information of $X$ while captures only the sufficient information about $Y$. As shown in Figure~\ref{fig:ibdnn}, we model a \gls{dnn} with a Markov chain $Y\mapsto X\mapsto T_{1}\mapsto \cdots \mapsto T_{k} \mapsto \hat{Y}$, where $X$, $Y$, $\hat{Y}$ and $T_i$ are respectively the input, its label, the inference output and the output of the $i$-th hidden layer. The \gls{ib} optimizes the following:
\begin{equation}
    \min_{P(T_i|X)} I(X;T_i)-\beta \cdot I(Y;T_i), 1 \le i \le k  \label{eqn:ib}
\end{equation}
where $I(X;T_i)$ is the mutual information between $X$ and $T_i$ while $I(Y;T_i)$ is the mutual information between $Y$ and $T_i$. Each layer can be thus described by its own unique information plane $(I(X;T_i), I(Y;T_i))$ which represents its compression and generalization capability. Notice that optimizing Equation~\ref{eqn:ib} is equivalent to minimizing $I(X;T_i)$  -- i.e., learning to compress -- whereas maximizing $I(Y;T_i)$ -- i.e., learning to generalize. To simplify notation, and without loss of generality, henceforth we will consider a single generic hidden layer $T$.

\begin{lemma}[Latent Robustness]\label{def:lr}
    For a given \gls{dnn}, the mutual information $I(Y;T)$ quantifies the robustness at the layer $T$.
\end{lemma}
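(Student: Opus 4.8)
The plan is to make the statement quantitative by identifying ``robustness at layer $T$'' with the largest accuracy the fixed local \dnn $f$ can retain when its input $T$ is replaced by any $l_p$-bounded version $T' = T + \delta_t$, and then showing that this quantity is controlled on both sides by $I(Y;T)$. First I would observe that an adversary sees only the latent representation, not the label, so every admissible attack is a (possibly randomized) map $T \mapsto T' = T + \delta_t(T)$; hence $Y \to X \to T \to T' \to \hat{Y}'$ is a Markov chain, where $\hat{Y}' = \argmax_{k} f(T')$ is the post-attack prediction. Applying the \gls{dpi} along this chain yields $I(Y;\hat{Y}') \le I(Y;T') \le I(Y;T)$, so no attack can make the label--representation information exceed $I(Y;T)$.

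Next I would invoke Fano's inequality on $Y \to \hat{Y}'$: writing $P_{\mathrm{adv}} = \Pr[\hat{Y}' \neq Y]$ for the error rate under the strongest admissible attack, $H(Y \mid \hat{Y}') \le h_b(P_{\mathrm{adv}}) + P_{\mathrm{adv}}\log K$, and since $H(Y \mid \hat{Y}') = H(Y) - I(Y;\hat{Y}') \ge H(Y) - I(Y;T)$,
\begin{equation}
    P_{\mathrm{adv}} \;\ge\; \frac{H(Y) - I(Y;T) - 1}{\log K}. \label{eqn:lr-fano}
\end{equation}
Thus the retained accuracy $1 - P_{\mathrm{adv}}$ is upper bounded by a strictly increasing function of $I(Y;T)$: a layer whose representation is nearly label-uninformative cannot be robust, irrespective of $f$. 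To obtain the converse direction --- that larger $I(Y;T)$ actually \emph{grants} robustness --- I would bound how much information a budget-$\sigma$ perturbation can erase: the chain rule together with $Y \to T \to T'$ gives $I(Y;T) - I(Y;T') = I(Y;T \mid T') \le H(T \mid T') = H(\delta_t \mid T') \le H(\delta_t)$, and $H(\delta_t)$ is at most the log-cardinality $\kappa(\sigma)$ of the $l_p$-ball of radius $\sigma$ in the numerical precision of the latent space. Combining this with the reverse-Fano (min-entropy) bound on the Bayes error, $\epsilon^{\star} \le 1 - 2^{-H(Y\mid T')}$ together with $H(Y \mid T') \le H(Y) - I(Y;T) + \kappa(\sigma)$, shows that when $I(Y;T)$ is close to $H(Y)$ the post-attack error stays below $1 - 2^{-\kappa(\sigma)}$; hence the Bayes-optimal decoder --- and, for a well-trained $f$, $f$ itself --- still recovers $Y$ reliably, so the achievable robustness is monotone in $I(Y;T)$.

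The main obstacle is this second half. Inequality~\cref{eqn:lr-fano} is one line of \gls{dpi} plus one line of Fano and already shows that $I(Y;T)$ is \emph{necessary} for robustness; upgrading it to a genuine two-sided characterization requires (i) a clean estimate of the perturbation ``capacity'' $\kappa(\sigma)$, which is delicate because latent activations are unbounded and only their floating-point encoding makes the ball finite, and (ii) controlling the gap between the Bayes decoder induced by $I(Y;T')$ and the fixed head $f$ of the given \dnn, which is model-specific. For the lemma as stated I would be content with the monotone conclusion obtained by pairing \cref{eqn:lr-fano} with the min-entropy bound: the best attainable latent robustness is a non-decreasing function of $I(Y;T)$, which legitimizes treating $I(Y;T)$ as the scalar robustness measure for layer $T$ throughout the subsequent analysis.
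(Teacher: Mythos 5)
Your first step is, at its core, the paper's entire proof: the paper models end-to-end robustness by $I(Y;\hat{Y})$ (following Tishby and Zaslavsky), applies the data processing inequality along $Y \mapsto X \mapsto T \mapsto \hat{Y}$ to obtain $I(Y;X) \ge I(Y;T) \ge I(Y;\hat{Y})$, and concludes that $I(Y;T)$ upper-bounds the attainable end-to-end information and hence serves as the robustness measure at layer $T$ --- nothing more. Your version of this direction is a genuine strengthening: by routing the chain through the attacked representation $T'$ and invoking Fano, you convert the information bound into an explicit lower bound on the post-attack error probability, which makes the lemma operational in terms of the attack success rate actually measured later in the paper; the DPI step, the chain-rule identity $I(Y;T)-I(Y;T')=I(Y;T\mid T')$, and the Fano manipulation are all correct. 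Where you go beyond the paper --- the converse ``sufficiency'' direction --- is not needed for the lemma as stated (the paper never attempts it), and the obstacles you yourself flag are real: the perturbation capacity $\kappa(\sigma)$ scales like the latent dimension times the per-coordinate precision, so a bound of the form $1-2^{-\kappa(\sigma)}$ is essentially vacuous for any realistic bottleneck, and the gap between the Bayes decoder for $T'$ and the fixed head $f$ is uncontrolled; consequently the ``monotone two-sided characterization'' should be presented as motivation rather than as an established result. If you keep only the DPI-plus-Fano half, what remains is a correct proof that subsumes the paper's three-line argument and sharpens its interpretation.
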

\begin{proof}
The end-to-end robustness of the \gls{dnn} model can be modeled with $I(Y;\hat{Y})$, which measures the mutual information between the label $Y$ and the \dnn inference $\hat{Y}$ \cite{tishby2015deep}. We apply the \gls{dpi} to describe the information loss during processing \cite{cover1999elements}:
\begin{equation}
    I(Y;X) \geq I(Y;T) \geq I(Y;\hat{Y}) \label{eqn:dpi}
\end{equation}

In short, the generalization metric $I(Y;T)$ of hidden layer $T$ also describes the upper bound of $I(Y;\hat{Y})$, which is intrinsically a measure of robustness at layer $T$.     
\end{proof}

By assuming that adversarial perturbations are not observable (i.e., adversarial perturbation doesn't change the ground-truth of mutual information), it follows there is a prior yet unknown optimal solution $I^*(Y;T)$ for a specific \gls{dnn} architecture that satisfies the \gls{ib} where adversarial perturbations cannot decrease the performance -- in other words, $I^*(Y;T)$ is resilient to adversarial attacks. The key issue is that although each \gls{dnn} has a hypothesis set defined by its parameters, the optimum parameter set exhibiting the largest $I^*(Y;T)$ is unknown. To this end, each trained \dnn using a finite dataset $(X,Y)$ has its own estimation $I(Y;T)$ and adversarial perturbations can decrease the estimated mutual information by introducing distribution shift in dataset.
\begin{theorem}[Compression-Robustness Trade-Off] \label{thm:1}
    The adversarial robustness $I(Y;T_{adv})$ is jointly determined by its ideal generalization capability $I^*(Y;T)$ and dimensionality $\mathcal{O}( |\mathcal{T}||\mathcal{Y}| / \sqrt{n})$:
    \begin{equation}
        I(Y;T_{adv}) = I^*(Y;T) - \mathcal{O}(\frac{|\mathcal{T}||\mathcal{Y}|}{\sqrt{n}}), \label{eqn:advb}
    \end{equation}
    where $n$ denotes the number of data and $|\mathcal{T}|$, $|\mathcal{Y}|$ are the cardinality of $T$ and $Y$, respectively.
\end{theorem}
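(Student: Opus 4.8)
The plan is to read \cref{eqn:advb} as a bias–variance style decomposition of the robustness gap and to show that each contribution is $\mathcal{O}(|\mathcal{T}||\mathcal{Y}|/\sqrt{n})$. By \cref{def:lr} the robustness at layer $T$ is the mutual information $I(Y;T)$, so it suffices to control how far the adversarially perturbed quantity $I(Y;T_{adv})$ can fall below the ideal value $I^*(Y;T)$ attained by the (unknown) optimal parameters. Let $\hat I(Y;T)$ denote the mutual information realized by the particular network trained on the $n$ available pairs $(X,Y)$ — i.e., the plug-in estimate induced by the empirical joint law of $(T,Y)$. I would begin from the identity
\begin{equation}
    I^*(Y;T) - I(Y;T_{adv}) = \underbrace{\bigl(I^*(Y;T) - \hat I(Y;T)\bigr)}_{\text{estimation gap}} + \underbrace{\bigl(\hat I(Y;T) - I(Y;T_{adv})\bigr)}_{\text{adversarial shift}},
\end{equation}
and bound the two terms separately.

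For the estimation gap, the key tool is a finite-sample concentration bound for the plug-in mutual-information estimator over a joint alphabet of size $|\mathcal{T}|\cdot|\mathcal{Y}|$. The empirical joint distribution deviates from the true one in $\ell_1$ by $\mathcal{O}(\sqrt{|\mathcal{T}||\mathcal{Y}|/n})$ with high probability (the standard rate for an empirical multinomial over $|\mathcal{T}||\mathcal{Y}|$ cells), and since $I(Y;T)=H(Y)+H(T)-H(T,Y)$ is a fixed combination of entropies, each of which is uniformly continuous in its distribution, propagating this deviation through — along the lines of the information-bottleneck generalization analysis of Shamir, Sabato and Tishby — yields $|I^*(Y;T)-\hat I(Y;T)| = \mathcal{O}(|\mathcal{T}||\mathcal{Y}|/\sqrt{n})$. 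This is precisely where the $|\mathcal{T}|$, $|\mathcal{Y}|$ and $\sqrt{n}$ dependence in \cref{eqn:advb} originates, and it is what justifies treating a small latent cardinality $|\mathcal{T}|$ as a variance-reducing, robustness-enhancing choice.

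For the adversarial-shift term I would invoke the standing assumption that perturbations are unobservable, i.e.\ that the $l_p$-bounded action does not alter the ground-truth conditional $P(Y\mid T)$ and hence cannot move the population-level quantity $I^*(Y;T)$. Consequently the adversary's only leverage is to steer the trained network's estimate toward the unfavorable end of its own confidence interval: $T_{adv}$ still lies in the high-probability region around the empirical law, so $\hat I(Y;T) - I(Y;T_{adv})$ is again at most the width of that interval, $\mathcal{O}(|\mathcal{T}||\mathcal{Y}|/\sqrt{n})$. Adding the two bounds, and using that adversarial action can only decrease $I(Y;T)$ (so the left-hand side is non-negative), gives $I(Y;T_{adv}) = I^*(Y;T) - \mathcal{O}(|\mathcal{T}||\mathcal{Y}|/\sqrt{n})$.

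The main obstacle I anticipate is the second term: mutual information is only log-Lipschitz in total variation, so near the boundary of the probability simplex the modulus of continuity degrades and a naive bound picks up $\log n$ factors; one must argue that the bounded perturbation budget keeps the induced distribution shift in a regime where those logarithmic terms are subdominant, or simply absorb them into $\mathcal{O}(\cdot)$. A secondary subtlety is identifying $\hat I(Y;T)$ with the plug-in estimator on the training sample — this is where the hypothesis that $T$ "satisfies the \gls{ib}" is used, ensuring the learned encoder $P(T\mid X)$ is the one to which the concentration bound applies — together with making precise the sense in which $I^*(Y;T)$ is the adversary-proof optimum rather than merely an upper bound from the \gls{dpi}.
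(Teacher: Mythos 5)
Your proposal takes essentially the same route as the paper: the paper's proof simply invokes the Shamir--Sabato--Tishby finite-sample bound $\|I^*(Y;T)-I(Y;T)\|\leq \mathcal{O}(|\mathcal{T}||\mathcal{Y}|/\sqrt{n})$ and then asserts that adversarial action drives $I(Y;T_{adv})$ to the lower end of that interval, which is exactly the content of your estimation-gap plus adversarial-shift decomposition. Your version is a slightly more explicit articulation (and your self-identified worry about the adversarial-shift step is the same leap the paper itself makes without further justification), so it matches the paper's argument and its level of rigor.
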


\begin{proof}
    \cite{shamir2010learning} have proven that the estimated mutual information $I(Y;T)$ using a dataset $(X,Y)$ with finite samples has the following error bound:
    \begin{equation}
        ||I^*(Y;T) - I(Y;T)||\leq \mathcal{O}(\frac{|\mathcal{T}||\mathcal{Y}|}{\sqrt{n}}). \label{eqn:fsb}
    \end{equation}
    
    As the adversarial examples aim to decrease the performance, the adversarial robustness $I(Y;T_{adv})$ approaches the lower bound in \cref{eqn:fsb}. Thus, \cref{eqn:advb} holds.
\end{proof}

\cref{thm:1} is the information version of the complexity-generalization trade-off in \gls{pac} learning. A larger latent space $|\mathcal{T}|$ (i.e., a more complex hypothesis set in \gls{pac} learning) will have larger variance resulting in decreased performance with inputs coming from a distribution different than $X$. Conversely, with a smaller latent space (i.e., a smaller hypothesis set), the \dnn has more bias, which leads to less accuracy (i.e., a smaller $I^*(Y;T)$). 

Since the \gls{ib} is model-agnostic and generalized, we point out that our theoretical framework alo provides value to explain the robustness in the general adversarial setting. For example, \cite{simon2019first} derived a similar theoretical finding that robustness of \dnns decreased with growing data dimension. However, they solely concentrate on dimensionality while our analysis delves into the trade-off. \cite{su2018robustness} and \cite{tsipras2018robustness} empirically explored the trade-off between robustness and performance, a concept close to \cref{thm:1}. However, we offer a rigorous theoretical explanation of this trade-off, revealing that dimensionality serves as the determining factor.

In distributed \dnns, bottlenecks are often introduced to compress the latent dimension in order to reduce the communication overhead. \cref{thm:1} reveals that the feature compression technique can also affect the resilience of distributed \dnn. Bottlenecks enhance the adversarial robustness by reducing the variance $\mathcal{O}({|\mathcal{T}||\mathcal{Y}|}/{\sqrt{n}})$ but also introduce vulnerability by unintentionally adding bias to decrease $I^*(Y;T)$. Therefore, we believe \cref{thm:1} provides a novel perspective to achieve robust distributed \dnn.

\begin{lemma}[Information Distortion]\label{def:infod}
    In a Markov chain $Y\mapsto X\mapsto T$, the conditional mutual information $I(Y;X|T)$ quantifies the distortion incurred during the transmission of information from $X$ to $T$. 
\end{lemma}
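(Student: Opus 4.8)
The plan is to establish the exact identity $I(Y;X\mid T) = I(Y;X) - I(Y;T)$, i.e.\ to show that the conditional mutual information is precisely the slack in the Data Processing Inequality used in \cref{def:lr}, and then to read off the interpretation that this gap is the task-relevant information about $Y$ lost in passing from $X$ to the representation $T$.

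First I would apply the chain rule for mutual information in its two equivalent expansions,
\begin{equation}
    I(Y;X,T) = I(Y;T) + I(Y;X\mid T) = I(Y;X) + I(Y;T\mid X).
\end{equation}
Next I would invoke the Markov property $Y\mapsto X\mapsto T$, which says $Y$ and $T$ are conditionally independent given $X$, so that $I(Y;T\mid X)=0$. Equating the two expansions and rearranging gives $I(Y;X\mid T) = I(Y;X) - I(Y;T)$. Since conditional mutual information is always non-negative, this simultaneously re-derives $I(Y;X)\ge I(Y;T)$ (the relevant half of \cref{eqn:dpi}) and pins its deficit to the single quantity $I(Y;X\mid T)$.

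The remaining step is interpretive: I would argue that $I(Y;X\mid T)$ is exactly the residual statistical dependence between the label and the input that persists after $T$ is revealed --- the information about $Y$ contained in $X$ but not carried through $T$. It equals zero iff $T$ is a sufficient statistic of $X$ for $Y$, i.e.\ the lossless case in which the \gls{dpi} holds with equality, and it increases as the representation sheds more label-relevant content; this is precisely the ``distortion'' incurred when transmitting information from $X$ to $T$. I would also note the tie-in to the \gls{ib} functional: since $I(Y;X)$ is a fixed, representation-independent constant, maximizing the generalization term $I(Y;T)$ in \cref{eqn:ib} is the same as minimizing $I(Y;X\mid T)$, so the lemma is just the informational restatement of ``compression loss.''

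I do not expect a genuine obstacle in the derivation --- the identity is a one-line consequence of the chain rule and the Markov assumption. The only point needing care is the interpretive claim: one must justify that \emph{this} conditional mutual information, rather than e.g.\ $I(X;T\mid Y)$ or the unconditioned difference, is the quantity the subsequent analysis should call ``distortion.'' I would settle this by leaning on the sufficient-statistic characterization above together with the role of $I(Y;T)$ as the robustness proxy already fixed in \cref{def:lr}, so that ``loss of robustness from $X$ to $T$'' and ``$I(Y;X\mid T)$'' become synonymous by construction.
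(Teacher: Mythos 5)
Your derivation is correct, but it takes a different route from the paper. You prove the identity $I(Y;X\mid T)=I(Y;X)-I(Y;T)$ via the two chain-rule expansions of $I(Y;X,T)$ and the Markov property $I(Y;T\mid X)=0$, and then read ``distortion'' off as the slack in the \gls{dpi}. The paper instead justifies the lemma by computing the expectation of the \gls{kl} divergence $D_{KL}[P(Y|X)\,\|\,P(Y|T)]$ --- the canonical distortion functional of the \gls{ib} framework following \cite{tishby2015deep} --- and showing, using the conditional independence $P(Y|X,T)=P(Y|X)$, that this expectation equals exactly $I(X;Y\mid T)$. The two arguments use the same Markov assumption but ground the interpretive claim differently: the paper's version directly identifies $I(Y;X\mid T)$ with the expected divergence between predicting $Y$ from $X$ versus from $T$, which answers your own caveat about why \emph{this} conditional mutual information (rather than, say, $I(X;T\mid Y)$) deserves the name ``distortion''; your version instead identifies it with the \gls{dpi} gap $I(Y;X)-I(Y;T)$, which is precisely the identity the paper later re-derives inside the proof of \cref{thm:2} (its \cref{eqn:residualinfo}), so your route effectively front-loads that step and makes the link to the robustness proxy $I(Y;T)$ of \cref{def:lr} more immediate. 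Both are valid; the paper's buys a tighter anchor to the \gls{ib} distortion measure, yours buys a shorter path to the quantity actually manipulated in the subsequent theorem.
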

\begin{proof}
Following \cite{tishby2015deep}, we use \gls{kl} divergence $D_{KL}[P(Y|X)||P(Y|T)]$ to characterize the distortion in \gls{ib} framework. The expectation of $D_{KL}$ is
\begin{equation}
    \mathbb{E}\{D_{KL}\} = \sum_{X,T}P(X,T)\sum_{Y}P(Y|X)\log \frac{P(Y|X)}{P(Y|T)}. \label{eqn:ekld}
\end{equation}

By applying the conditional independence of Markov chain, \cref{eqn:ekld} can be refined to
\begin{equation}
    \begin{aligned}
        \mathbb{E}\{D_{KL}\} & = \sum P(X,T)P(Y|X,T)\log \frac{P(Y|X)}{P(Y|T)} \\
        & = \sum P(X,T,Y) \log \frac{P(Y|X)P(X|T)}{P(Y|T)P(X|T)} \\ 
        & = \sum P(X,T,Y) \log \frac{P(X,Y|T)}{P(Y|T)P(X|T)} \\
        & = I(X;Y|T). 
    \end{aligned}
\end{equation}    
\end{proof}
\begin{theorem}[Input vs Latent Robustness]\label{thm:2}
    With same level of distortion introduced by input $X_{adv}$ and latent $T_{adv}$, it has
    \begin{equation}
        I(Y;T')\leq I(Y;T_{adv}), \label{eqn:yt}
    \end{equation}
    where $T'$ denotes the latent corresponding to $X_{adv}$. 
\end{theorem}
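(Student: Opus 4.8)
The plan is to present the input-space and latent-space attacks as two information channels that deliver a latent representation to the local \dnn, and to compare the label information surviving at the two latent points using the chain rule of mutual information together with \cref{def:infod} and the \gls{dpi}. For the input attack the mobile \dnn receives the corrupted input $X_{adv}$ and outputs $T'=g(X_{adv})$, so $Y\mapsto X_{adv}\mapsto T'$ is a Markov chain and, by \cref{def:infod}, the distortion introduced along it is $I(Y;X_{adv}\mid T')$. For the latent attack the mobile \dnn receives the clean input $X$ and the perturbation is applied afterwards, giving the chain $Y\mapsto X\mapsto T\mapsto T_{adv}$; by \cref{def:infod} the distortion from the received input $X$ to the delivered latent $T_{adv}$ is $I(Y;X\mid T_{adv})$. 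I would read the hypothesis ``same level of distortion introduced by input $X_{adv}$ and latent $T_{adv}$'' as the equality $I(Y;X_{adv}\mid T') = I(Y;X\mid T_{adv}) =: D$.

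Next I would apply the chain rule $I(Y;A,B)=I(Y;A)+I(Y;B\mid A)=I(Y;B)+I(Y;A\mid B)$ to each of the two pairs. Since $T'=g(X_{adv})$ is a deterministic function of $X_{adv}$, $I(Y;T'\mid X_{adv})=0$ and hence $I(Y;X_{adv}) = I(Y;T') + D$. Likewise, since $T_{adv}$ is a function of $X$ (the latent perturbation being computed from $g(X)$ and independent randomness, not from $Y$), $I(Y;T_{adv}\mid X)=0$ and hence $I(Y;X) = I(Y;T_{adv}) + D$. Subtracting these two identities cancels $D$ and gives $I(Y;T_{adv}) - I(Y;T') = I(Y;X) - I(Y;X_{adv})$. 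Finally, because $X_{adv}$ is obtained from the clean input $X$ through a label-oblivious perturbation map, $Y\mapsto X\mapsto X_{adv}$ is a Markov chain and the \gls{dpi} yields $I(Y;X)\ge I(Y;X_{adv})$; hence $I(Y;T_{adv}) - I(Y;T')\ge 0$, which is exactly \cref{eqn:yt}.

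I expect the main obstacle to be conceptual rather than computational: pinning down what ``same level of distortion'' should mean, and choosing the two Markov chains so that the distortion terms are precisely the ones that cancel under the chain rule — note that the asymmetry (one channel starts from $X_{adv}$, the other from the clean $X$) is exactly what turns the identity into an inequality. A secondary delicate point is the \gls{dpi} step $I(Y;X)\ge I(Y;X_{adv})$, which needs the input perturbation to carry no information about $Y$ beyond what is already in $X$ (e.g.\ the attacker uses the predicted or a surrogate label, not the true one); this is consistent with the paper's standing assumption that adversarial perturbations do not alter the ground-truth statistics, and it is the same assumption that justifies $I(Y;T_{adv}\mid X)=0$ above.
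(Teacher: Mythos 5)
Your proposal is correct and follows essentially the same route as the paper: both derive the residual-information identity $I(X;Y|T)=I(X;Y)-I(Y;T)$ for each Markov chain (the paper via factorizing $P(X,Y,T)$ to get $I(X,T;Y)=I(X;Y)$, you via the chain rule with $I(Y;T|X)=0$, which is the same fact), interpret ``same level of distortion'' as equality of the two conditional mutual informations, and close with the \gls{dpi} applied to $Y\mapsto X\mapsto X_{adv}$. The only differences are presentational, plus your explicit remark that the attack must be label-oblivious for the final DPI step, which the paper leaves implicit in ``$X_{adv}$ is a mapping of $X$.''
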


\begin{proof}
    Due to the chain rule of mutual information, 
    \begin{equation}
        I(X;Y|T) = I(X,T;Y) - I(Y;T). \label{eqn:xyt}
    \end{equation}
    
    For a Markov chain $Y\mapsto X\mapsto T$, the joint distribution $P(X,Y,T)$ has following property
    \begin{equation}
        \begin{aligned}
            P(X,Y,T) &= P(T|X,Y)P(Y|X)P(X) \\
            &= P(T|X)P(Y|X)P(X).
        \end{aligned}
    \end{equation}
    
    Therefore, $I(X,T;Y)$ can be simplified as
    \begin{equation}
        \begin{aligned}
            I(X,T;Y) & = \mathbb{E}\left\{\log \frac{P(X,T,Y)}{P(X,T)P(Y)}\right\} \\
            & = \mathbb{E}\left\{\log \frac{P(T|X)P(Y|X)P(X)}{P(T|X)P(X)P(Y)}\right\} \\
            & = \mathbb{E}\left\{\log \frac{P(Y|X)}{P(Y)}\right\} = I(X;Y). \label{eqn:xy}
        \end{aligned}
    \end{equation}
    
    From \cref{eqn:xyt} and \cref{eqn:xy}, it follows that
    \begin{equation}
        I(X;Y|T) = I(X;Y) - I(Y;T). \label{eqn:residualinfo}
    \end{equation}
    
    Let $Y\mapsto X_{adv}\mapsto T'$ be the Markov chain of adversarial attacks in input space, the distortion introduced by $X_{adv}$ is 
    \begin{equation}
        I(X_{adv};Y|T') = I(X_{adv};Y) - I(Y;T'),
    \end{equation}
    where $X_{adv}$ represents adversarial samples and $T'$ is the corresponding latent representation. 
    
    Similarly, for a Markov chain $Y\mapsto X\mapsto T_{adv}$, we have
    \begin{equation}
        I(X;Y|T_{adv}) = I(X;Y) - I(Y;T_{adv}),
    \end{equation}
    where $T_{adv}$ and $I(X;Y|T_{adv})$ represents the latent perturbation and corresponding information distortion. 
    
    Assume $X_{adv}$ and $T_{adv}$ have same level of distortion, 
    \begin{equation}
        I(X_{adv};Y) - I(Y;T') = I(X;Y) - I(Y;T_{adv}). 
    \end{equation}
    
    Since $X_{adv}$ is a mapping of $X$, there is a Markov chain $Y\mapsto X\mapsto X_{adv}$.
    By \gls{dpi}, it follows that 
    \begin{equation}
        I(X;Y)\geq I(X_{adv};Y).
    \end{equation}
    
    Therefore, \cref{eqn:yt} holds.
\end{proof}

\cref{thm:2} formally describes that with same level of information distortion, attacking the latent space is less effective than attacking the input space. An important corollary can be derived from \cref{thm:2} by generalizing the input space to any latent space that is before $T$ in the Markov chain.

\begin{corollary}[Depth-Robustness Trade-Off]\label{thm:2-1}
    Assume same level of distortion is introduced to two latent space $T_{i-1}$ and $T_i$ where $i$ denotes the depth of \dnn layers, it has
    \begin{equation}
        I(Y;T_i') \leq I(Y;T^i_{adv}),
    \end{equation}
    where $T_i'$ denotes the latent at layer $i$ after introducing perturbation to $T_{i-1}$ while $T^i_{adv}$ denotes the latent at layer $i$ directly perturbed by distortion.
\end{corollary}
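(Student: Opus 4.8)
The plan is to reprise the proof of \cref{thm:2} almost verbatim, with the role of the input space $X$ played by the shallower latent $T_{i-1}$ and the role of the target latent $T$ played by $T_i$. Two structural facts are all that is needed. First, the layerwise dependence of a \dnn (\cref{fig:ibdnn}) gives the clean Markov chain $Y\mapsto T_{i-1}\mapsto T_i$, so the residual-information identity \cref{eqn:residualinfo} applies with $(X,T)$ replaced by $(T_{i-1},T_i)$, yielding $I(T_{i-1};Y\mid T_i)=I(T_{i-1};Y)-I(Y;T_i)$; by the same relabeling, \cref{def:infod} certifies that this conditional mutual information is the correct distortion measure for a perturbation applied at layer $i-1$. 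Second, feeding a perturbed shallow feature through the \emph{fixed} segment of the network between layers $i-1$ and $i$ produces the Markov chain $Y\mapsto T^{i-1}_{adv}\mapsto T_i'$, since that segment acts identically whether or not its input was tampered with.

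With these in hand I would instantiate the identity in the two attack scenarios. For the attack on $T_{i-1}$, the chain $Y\mapsto T^{i-1}_{adv}\mapsto T_i'$ gives
\begin{equation}
    I(T^{i-1}_{adv};Y\mid T_i') = I(T^{i-1}_{adv};Y) - I(Y;T_i'),
\end{equation}
while for the direct attack on $T_i$, the chain $Y\mapsto T_{i-1}\mapsto T^i_{adv}$ gives
\begin{equation}
    I(T_{i-1};Y\mid T^i_{adv}) = I(T_{i-1};Y) - I(Y;T^i_{adv}).
\end{equation}
Imposing that the two perturbations induce the same level of distortion, i.e.\ equating the left-hand sides exactly as in \cref{thm:2}, yields $I(T^{i-1}_{adv};Y) - I(Y;T_i') = I(T_{i-1};Y) - I(Y;T^i_{adv})$. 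Finally, since the adversarial shallow feature is modeled as a (possibly stochastic) mapping of the clean one, there is a Markov chain $Y\mapsto T_{i-1}\mapsto T^{i-1}_{adv}$, so the \gls{dpi} gives $I(T_{i-1};Y)\ge I(T^{i-1}_{adv};Y)$. Subtracting the equated-distortion identity, $I(Y;T^i_{adv}) - I(Y;T_i') = I(T_{i-1};Y) - I(T^{i-1}_{adv};Y)\ge 0$, which is the claimed inequality; applying this step repeatedly along $T_j\mapsto\cdots\mapsto T_i$ for $j<i$ then formalizes the statement that deeper splitting points are more robust.

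I do not expect a genuine obstacle, as the corollary is essentially a change of variables in \cref{thm:2}; the only care required is to justify that the Markov structure survives perturbation. The delicate point is the chain $Y\mapsto T^{i-1}_{adv}\mapsto T_i'$: one must argue that, once the shallow feature is fixed to its perturbed value, $T_i'$ carries no additional information about $Y$, which holds because the network map from layer $i-1$ to layer $i$ is deterministic given its input. This, together with the inherited modeling assumption that the perturbation itself is a mapping of the clean feature (needed for $Y\mapsto T_{i-1}\mapsto T^{i-1}_{adv}$), is what the argument rests on; everything else is the same chain-rule-plus-\gls{dpi} bookkeeping already carried out for \cref{thm:2}.
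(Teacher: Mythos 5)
Your proposal is correct and follows essentially the same route as the paper, which proves the corollary simply by rerunning the proof of \cref{thm:2} with the Markov chain $Y\mapsto X\mapsto T$ replaced by $Y\mapsto T_{i-1}\mapsto T_i$. Your additional remarks on why the Markov structure survives the perturbation are sound elaborations of the same argument rather than a different approach.
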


The proof is the same as \cref{thm:2} by replacing the Markov chain $Y\mapsto X\mapsto T$ to $Y\mapsto T_{i-1}\mapsto T_i$. \cref{thm:2-1} describes that the latent space in early layers will be more vulnerable than the the deeper latent space. This is because a small information loss gets amplified during propagation. We remark that this theoretical finding is in line with \cite{hong2020panda}, which states that the difference between adversarial and benign samples is less significant in early layers compared to deeper ones, thus enabling the early exits.

\cref{thm:2-1} is critical to distributed \dnns as finding the optimal splitting point (depth of latent space) of \dnn is one of the fundamental research questions to optimize the computation and communication overhead of distributed computing \cite{matsubara2022bottlefit,matsubara2022sc2}. However, we theoretically demonstrate that the splitting point is also a key factor of robustness which needs to be considered in design.

\subsection{Connect Theory to Experiments} \label{mi_est}


One of the open challenges in \gls{ib} is to properly estimate the mutual information between high dimensional data. Existing literature usually involves simple \dnn architecture and small datasets~\cite{tsipras2018robustness,saxe2019information}. To this end, our large-scale empirical study in \cref{sec:experiments} is based on end-to-end performance rather than \gls{ib}. In order to connect the theoretical framework to experiments, we perform an experiment on MNIST with a simple \dnn that consists of 3 layers of \glspl{cnn} followed by a linear layer. Each \gls{cnn} layer is followed by a batch normalization and a ReLU activation. Maxpooling layers are applied after the first and second \gls{cnn} layer to downsample the features and a global average pooling layer is applied before the linear layer. The simple \dnn achieved near perfect accuracy (99.42\%) on MNIST. 

\begin{table}[t]
    \centering
    \caption{Estimated mutual information $\hat{I}(Y;T)$ and classification accuracy caused by input and latent perturbations as a Function of $\epsilon$ on MNIST.}
    \resizebox{\columnwidth}{!}
    {\begin{tabular}{ccccccccccccc}
        \toprule
         & \multicolumn{2}{c}{MINE \cite{belghazi2018mutual}} & \multicolumn{2}{c}{NWJ \cite{nguyen2010estimating}} & \multicolumn{2}{c}{CPC \cite{oord2018representation}} & \multicolumn{2}{c}{CLUB \cite{cheng2020club}} & \multicolumn{2}{c}{DoE \cite{mcallester2020formal}} & \multicolumn{2}{c}{Accuracy} \\
         & Input & Latent & Input & Latent & Input & Latent & Input & Latent & Input & Latent & Input & Latent \\
        \midrule
        $\epsilon=0.01$ & 2.01 & \textbf{2.08} & 1.28 & \textbf{1.31} & 2.25 & \textbf{2.26} & 9.88 & \textbf{9.88} & 27.44 & \textbf{27.88} & 98.68\% & \textbf{98.89\%} \\
        $\epsilon=0.02$ & 2.01 & \textbf{2.06} & 1.25 & \textbf{1.28} & 2.23 & \textbf{2.24} & 9.82 & \textbf{9.83} & 26.47 & \textbf{27.37} & 97.09\% & \textbf{97.89\%} \\
        $\epsilon=0.03$ & 1.93 & \textbf{1.95} & 1.17 & \textbf{1.26} & 2.19 & \textbf{2.22} & 9.74 & \textbf{9.77} & 25.28 & \textbf{26.88} & 94.27\% & \textbf{96.24\%} \\
        $\epsilon=0.04$ & 1.83 & \textbf{1.98} & 1.10 & \textbf{1.22} & 2.16 & \textbf{2.20} & 9.64 & \textbf{9.69} & 23.82 & \textbf{26.26} & 88.79\% & \textbf{93.40\%} \\
        $\epsilon=0.05$ & 1.78 & \textbf{1.88} & 1.01 & \textbf{1.16} & 2.11 & \textbf{2.16} & 9.51 & \textbf{9.59} & 22.03 & \textbf{25.61} & 78.64\% & \textbf{88.63\%} \\
        $\epsilon=0.06$ & 1.71 & \textbf{1.85} & 0.89 & \textbf{1.12} & 2.04 & \textbf{2.12} & 9.36 & \textbf{9.46} & 19.88 & \textbf{24.86} & 56.39\% & \textbf{79.96\%} \\
        $\epsilon=0.07$ & 1.62 & \textbf{1.80} & 0.79 & \textbf{1.12} & 1.94 & \textbf{2.07} & 9.19 & \textbf{9.32} & 17.13 & \textbf{24.06} & 31.59\% & \textbf{67.98\%} \\
        $\epsilon=0.08$ & 1.48 & \textbf{1.79} & 0.67 & \textbf{1.03} & 1.83 & \textbf{2.00} & 9.01 & \textbf{9.16} & 13.77 & \textbf{23.12} & 13.54\% & \textbf{52.26\%} \\
        $\epsilon=0.09$ & 1.35 & \textbf{1.65} & 0.56 & \textbf{1.00} & 1.69 & \textbf{1.91} & 8.81 & \textbf{8.98} & 9.83 & \textbf{22.22} & 4.34\% & \textbf{34.75\%} \\
        $\epsilon=0.10$ & 1.19 & \textbf{1.62} & 0.45 & \textbf{0.97} & 1.51 & \textbf{1.81} & 8.61 & \textbf{8.77} & 5.08 & \textbf{21.17} & 1.22\% & \textbf{20.58\%} \\
        \bottomrule
    \end{tabular}}\label{tab:mine}
\end{table}

We consider the feature map output by the 2nd \gls{cnn} layer as the target latent space and $l_\infty$ constrained \pgd \cite{madry2017towards} with a perturbation strength $\epsilon$ is applied to both input and latent space. To have a comprehensive evaluation, multiple commonly used mutual information estimators are applied \cite{belghazi2018mutual,nguyen2010estimating,oord2018representation,mcallester2020formal,cheng2020club}. Among them, MINE \cite{belghazi2018mutual}, NWJ \cite{nguyen2010estimating}, and CPC \cite{oord2018representation} are \textit{lower bounded} estimators while DoE \cite{mcallester2020formal} and CLUB \cite{cheng2020club} are \textit{upper bounded} estimators. 

\cref{tab:mine} demonstrates the results of estimated mutual information $\hat{I}(Y;T)$ and corresponding classification accuracy as a function of noise level $\epsilon$. Note that the estimated mutual information varies significantly across different estimators, as they are built on different lower or upper bounds. While these estimations cannot give us exact value of mutual information, all of them show a same trend. To this end, we consider these estimations to be valid. As shown in \cref{tab:mine}, with an increasing perturbation budget $\epsilon$, all estimated $\hat{I}(Y;T)$ as well as the accuracy of CNN decrease. This indicates that the $I(Y;T)$ is an appropriate tool to quantify the \dnn robustness as proposed in Lemma~\ref{def:lr}. Moreover, the mutual information as well as the classification accuracy associated with latent perturbations shows a higher value compared to those of input perturbations, which supports \cref{thm:2}. This experiment bridges the theoretical analysis to large scale experiments in the rest of paper.

\section{Experimental Setup}\label{sec:exp-setup}

\subsection{Attacks Under Consideration}
Adversarial attacks can be categorized as gradient-based, score-based and decision-based approaches. Gradient-based attacks craft the adversarial samples by maximizing classification loss with gradient updates. On the other hand, score-based attacks leverage the output score of \dnn as feedback to iteratively search the perturbation in the black-box setting and decision-based attacks can only access the hard label which has the highest score in the output. To have a thorough assessment, we implemented 10 popular attacks to \dnns. These include 4 gradient-based white-box attacks \cite{goodfellow2014explaining,kurakin2018adversarial,dong2018boosting,madry2017towards}, as well as 3 score-based black-box attacks \cite{ilyas2018black,li2019nattack,andriushchenko2020square} and 3 decision-based black-box attacks \cite{dong2019efficient,cheng2019sign,wang2022triangle}. 

\noindent\textbf{White-box Attacks:} We consider 4 gradient-based attacks only in the white-box setting because latent representations are different for each model, resulting in numerous surrogate \dnns training in black-box scenarios that may be infeasible in practice. We choose \fgsm \cite{goodfellow2014explaining}, \bim \cite{kurakin2018adversarial}, \mim \cite{dong2018boosting} with $l_\infty$ norm constraints. \pgd \cite{madry2017towards} is implemented for both $l_2$ and $l_\infty$ spaces as a baseline for other black-box attacks.

\noindent\textbf{Black-box Attacks:} We consider 3 score-based attacks \nes \cite{ilyas2018black}, \natk \cite{li2019nattack} and \satk \cite{andriushchenko2020square} in $l_{\infty}$ space and 3 decision-based attacks \eatk \cite{dong2019efficient}, \sopt \cite{cheng2019sign} and \tatk \cite{wang2022triangle} in $l_2$ space.

\noindent\textbf{Dataset and Metrics:} We evaluate adversarial robustness using 1000 samples from the validation set of ImageNet-1K \cite{deng2009imagenet}, limiting the samples to those which are correctly classified. We define the perturbation budget $\epsilon$ as the mean square error (MSE) under the $l_2$ norm constraint (i.e., $\epsilon\times d=\sigma^2$ and $\epsilon\times t=\sigma^2$ in input and latent space respectively where $d$, $t$ and $\sigma$ are the dimension of input, dimension of latent, and the distance constraint defined in \cref{sec:model}.) and the maximum element-wise distance under $l_\infty$ norm constraint (i.e., $\epsilon=\sigma$), we define the \gls{asr} as
\begin{equation}
    \mbox{ASR}(\epsilon)=\frac{1}{N}\sum_{i=1}^{N}\mathbf{I}\left\{\argmax_{k=1,\dots,K} f(x_i,\delta_i) \neq y_i\right\},
\end{equation}
where $\mathbf{I}\{\cdot\}$ is the indicator function and $f(x_i,\delta_i)$ is the \gls{dnn} output when fed with the $i$-th sample.

\begin{table}[t]
    \centering
    \caption{List of feature compression approaches considered in this paper}
    \resizebox{\columnwidth}{!}
    {
    \begin{tabular}{ccl}
    \toprule
    {Category} & {Approach} & \hspace{3.6cm}{Description} \\
    \midrule
    \multirow{3}{1.5cm}{Dimension Reduction} & \acrshort{sb} & naive supervised compression trained with cross entropy \cite{eshratifar2019bottlenet,shao2020bottlenet++} \\
    & \acrshort{db} & bottleneck trained with naive knowledge distillation \cite{matsubara2019distilled} \\
    & \acrshort{bf} & multi-stage training with distillation and cross entropy \cite{matsubara2022bottlefit} \\
    \midrule
    \multirow{2}{1.5cm}{Datasize Reduction} & \acrshort{jc} & reduce precision in frequency domain using JPEG approach \cite{alvar2021pareto} \\
     & \acrshort{qt} & uniformly compress every element using naive bit quantization \cite{singh2020end} \\
    \midrule
    \multirow{2}{1.5cm}{Advanced} & \multirow{2}{.4cm}{ES} & bottleneck trained with distillation and information-based loss \\
     & & and data compressed with quantization and entropy coding \cite{matsubara2022supervised} \\
    \bottomrule
    \end{tabular}}
    \label{tab:sc}
\end{table} 

\subsection{DNNs Under Consideration} \label{sec:dnn_setup}
\noindent\textbf{\dnn Architectures.} First, we consider 3 \glspl{dnn}:~VGG16 from \cite{simonyan2014very} as well as ResNet50 and ResNet152 from \cite{he2016deep}. Both VGG and ResNet have 5 feature extraction blocks which consist of several identical convolutional layers to extract features in different depths. During experiments, we consider the output of the third block as the target latent space without further specification. (We investigate the latent robustness as a function of depth in \cref{exp:depth}.) 

Next, to investigate the effect introduced by the feature compression layer (i.e., the ``\textit{bottleneck}'') proposed for distributed \dnns, we introduce the same bottleneck design as \cite{matsubara2022sc2} to VGG16, ResNet50 and ResNet152 and denote the new architectures as VGG16-fc, ResNet50-fc and ResNet152-fc.

\noindent\textbf{Compression Approaches.}~In distributed \dnns, compression can be categorized as dimension reduction and data size reduction. Dimension reduction applies the bottleneck layer to compress the cardinality of the latent space while data size reduction leverages lossless (e.g. entropy coding) or lossy (e.g. quantization, JPEG) coding to minimize the bit rate of latent representations. Note that dimension reduction is a standalone approach that can applied on top of the data size reduction and vise versa. 

We consider 3 different bottleneck training strategies for dimension reduction: \gls{sb} \cite{eshratifar2019bottlenet,shao2020bottlenet++}, \gls{db} \cite{matsubara2019distilled} and \gls{bf} \cite{matsubara2022bottlefit}. We also choose 2 data size reduction strategies \gls{jc} \cite{alvar2021pareto}, \gls{qt}~\cite{singh2020end} as well as 1 advanced approach \gls{es} that both compress the dimension and data size \cite{matsubara2022supervised}. We summarize these approaches in \cref{tab:sc}.

\section{Experimental Results}\label{sec:experiments}

\subsection{Performance w.r.t DNN Architecture}\label{sec:performance_architecture}

We first assess the latent robustness against different attack algorithms. \cref{fig:bf152} shows the \gls{asr} obtained on ResNet152-fc with perturbation budget $\epsilon = 0.01$. Remarkably, we notice that the \gls{asr} is higher for attacks in the input space than attacks in the latent space for each attack algorithm considered. In the case of \tatk, the latent \gls{asr} is 88\% less than the input \gls{asr}. On average, the \gls{asr} in input is 57.49\% higher than the \gls{asr} obtained by attacks in the latent space. Moreover,  \satk, \eatk, and \tatk have lowest \gls{asr} on latent representations. This is because these attacks search perturbations in a lower dimensional space, and hence it is more challenging for the adversary to find the effective distortions in compressed latent space. \cref{fig:bf152} support the finding of \cref{thm:2}, which states that under the same distortion constraint, the latent space constantly demonstrates better robustness than the input space.

\begin{figure}[t]
    \centering
    \includegraphics[width=\columnwidth]{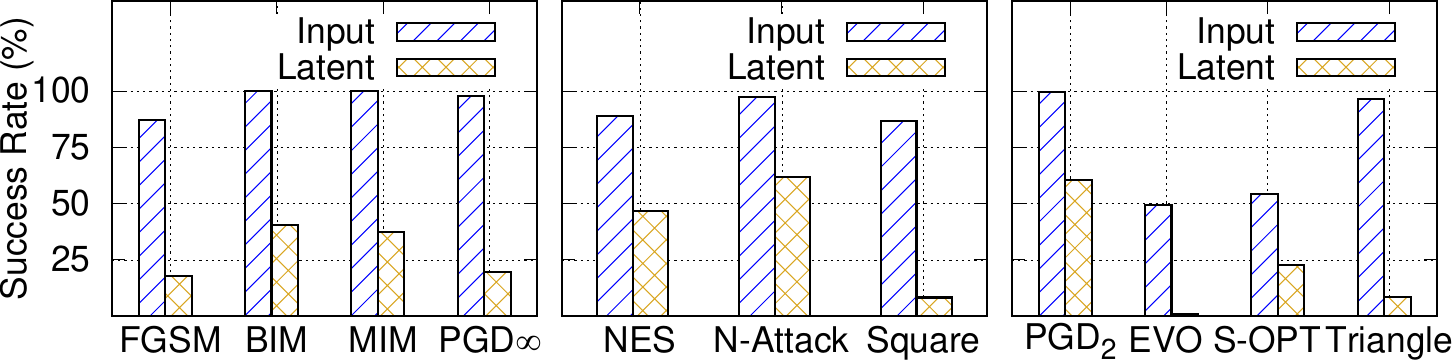}
    \caption{10 different attacks to ResNet152-fc with perturbation budget $\epsilon = 0.01$.}
    \label{fig:bf152}
\end{figure}

As depicted in \cref{fig:bf152}, although various attacks achieved different \gls{asr}, they also demonstrated a similar trend in performance loss. For this reason, in the following experiments we only show results of selected attacks for briefness. To investigate the effect of different \dnn architectures, we choose \pgd as the white-box baseline, \natk as the score-based attack, and \tatk as the representative of decision-based attack. \cref{fig:md} shows the \gls{asr} of \pgd, \natk and \tatk on different \dnns with $\epsilon=0.003$. For each \dnn, the \gls{asr} is higher in input-space attacks. In VGG16-bf, which shows the best robustness, the average \gls{asr} in the latent space is 87.8\% lower than input attacks. On average, latent representations are 58.33\% more robust. Therefore, the experiments in \cref{fig:bf152,fig:md} demonstrate that \cref{thm:2} is valid across various adversarial attacks and \dnn architectures.

\subsection{Performance w.r.t Compression Approach} \label{sec:exp2}

\begin{figure}[t]
    \centering
    \includegraphics[width=\columnwidth]{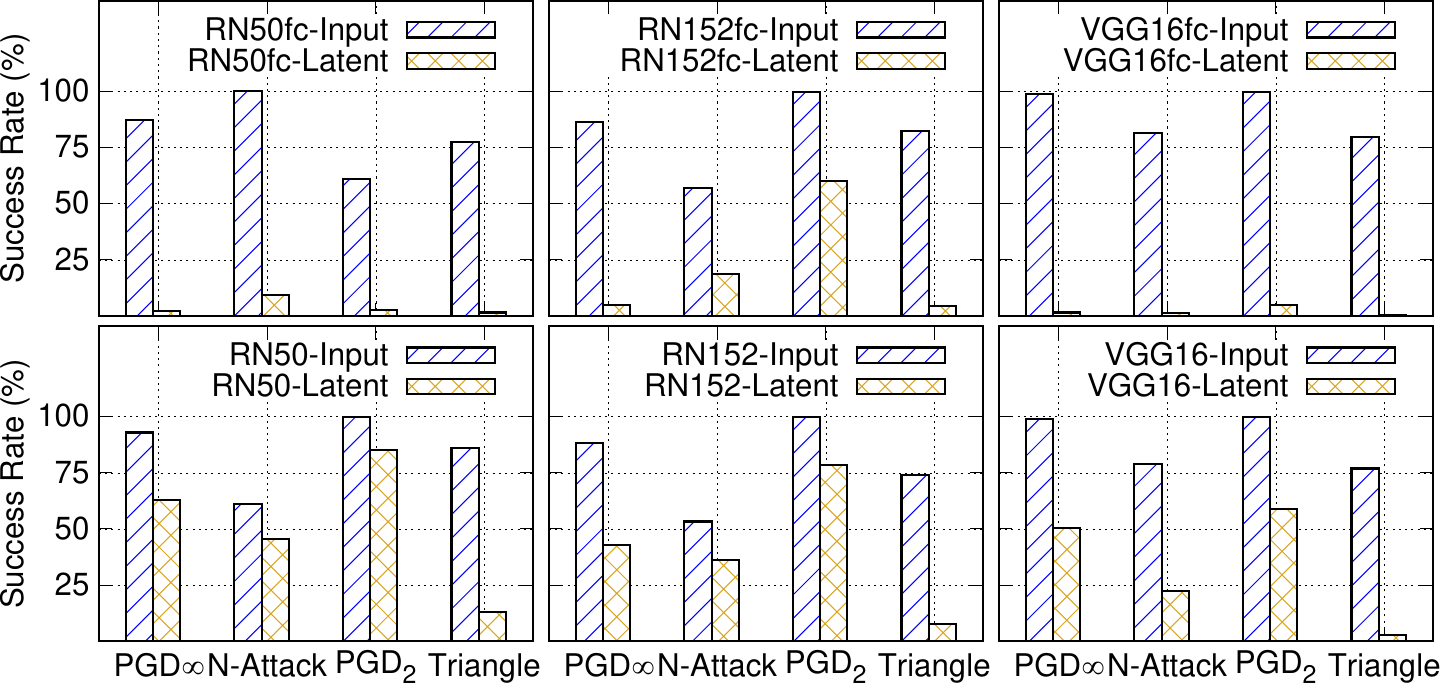}
    \caption{Whitebox baseline (PGD) and blackbox attacks under $l_\infty$ and $l_2$ in input and latent space with perturbation budget $\epsilon = 0.003$ applied to 6 different \dnns.}
    \label{fig:md}
\end{figure}

To evaluate the robustness of different compression approaches, we choose \satk and \tatk which are more recent approaches for score-based and decision-based attacks respectively. We do not consider gradient-based attacks as compression approaches such as \gls{db}, \gls{qt}, \gls{es} can lead to gradient masking. Hence, their robustness cannot be correctly evaluated by naive gradient-based attacks \cite{athalye2018obfuscated}. We choose a larger perturbation budget ($\epsilon = 0.05$) than the experiments depicted in \cref{fig:bf152} to further evaluate whether the compressed feature space is robust to attacks relying on low-dimensional sub-space searching. We note that data compression can be applied in addition to bottlenecks. However, for ablation study purposes, we choose ResNet50 without bottlenecks for \gls{jc} and \gls{qt} and ResNet50-fc for others. 

\cref{fig:sc} shows the \gls{asr} of \satk in $l_\infty$ space and \tatk in $l_2$ space with perturbation budget $\epsilon = 0.05$. Except the \gls{jc} and \gls{qt}, the adversarial robustness shows the same trend regardless of the examined approaches. The average \glspl{asr} in input space are 79.07\% and 87.22\% higher than the average \glspl{asr} in latent space for \satk and \tatk respectively. For \glspl{dnn} with bottlenecks, despite the increase in perturbation, the \gls{asr} of \satk and \tatk performed in latent representations do not increase distinctively comparing to \cref{fig:bf152}. However, since  \gls{jc} and \gls{qt} do not have separate feature compression layers, the \gls{asr} of \satk and \tatk in input space are only 55.8\%, 0.25\% higher than the attacks in latent space, showing a significant downgrade comparing to the other 4 approaches. These results confirm that the compressed feature space is indeed robust to attacks that search in lower dimensions.

\begin{figure}[t]
    \centering
    \includegraphics[width=\columnwidth]{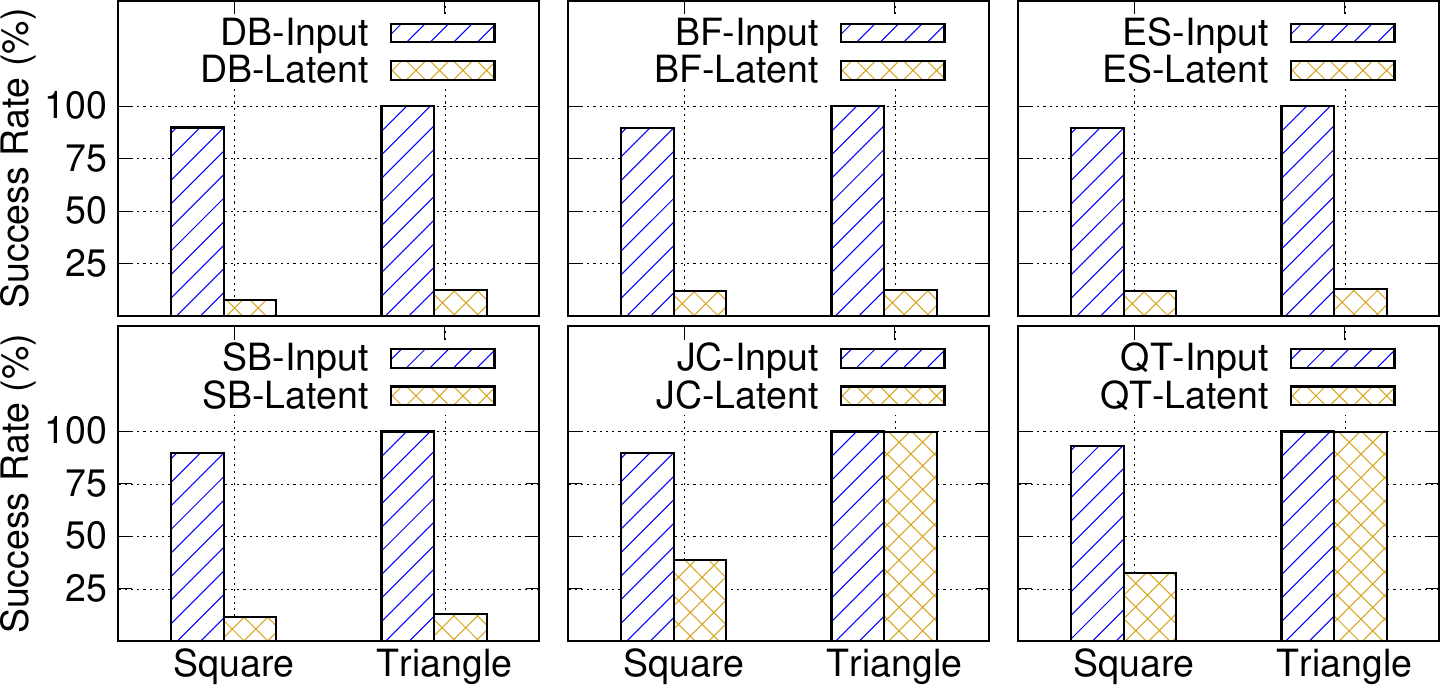}
    \caption{Square and Triangle attack success rate associated with 6 feature compression approaches with perturbation budget $\epsilon = 0.05$.}
    \label{fig:sc}
\end{figure}

\subsection{Performance w.r.t Dimension} \label{exp_dim}

In the previous experiment, we have shown that the robustness of the latent representation is mostly characterized by the bottleneck layer properties rather than the data size compression approach. Thus, we further evaluate the robustness for different sizes of latent space using \natk, \mim under $l_\infty$ constraint and \sopt, \pgd under $l_2$ constraint with multiple perturbation budgets ($\epsilon = 0.003$; $\epsilon = 0.01$; $\epsilon = 0.03$). The cardinality of the latent space is controlled by the number of channels at the bottleneck layer. We first set the channel number as 12 for ResNet152-fc that can achieve 77.47\% validation accuracy, which is almost similar to the performance of the original ResNet152 (78.31\%). Then, we reduce the channel size to 3, which decreases the dimension of latent representations but also reduces the end-to-end performance to 70.01\%. We do not repeat the results for \satk and \tatk since they fail to achieve satisfactory \gls{asr} in the previous experiments due to their smaller search subspace, as shown in \cref{fig:bf152,fig:sc}. 

\cref{fig:ch} shows results obtained by considering the $l_\infty$ and $l_2$ attacks with multiple perturbation budgets ($\epsilon = 0.003$; $\epsilon = 0.01$; $\epsilon = 0.03$) in the latent space of original ResNet-152, 12-channel ResNet152-fc and 3-channel ResNet152-fc. From ResNet152 to 12-channel ResNet152-fc, the \gls{asr} reduces as the dimensionality of latent representations decreases. However, after reducing the channel size to 3, the \gls{asr} does not decrease any further. Conversely, distributed \dnns with a smaller channel size become more vulnerable to perturbations. (One exception is \mim with $\epsilon=0.03$, where the \gls{asr} of the 12-channel model is $79.5\%$ and the \gls{asr} of the 3-channel model is $75.0\%$. However, due to the difference of devices and random seeds, the \gls{asr} can vary 2-3\%. Thus we do not consider the decrease of the \mim success rate in 3-channel ResNet152-fc which is less than 5\%.) This is because when reducing channels from 12 to 3 channels, the accuracy also decreases to 7.46\%, which in turn lessens the end-to-end generalization capability (i.e., $I^*(Y;T)$). This experiment supports our analysis in \cref{thm:1} that the robustness in latent representations of distributed \gls{dnn} is jointly determined by the end-to-end performance and feature dimensions.

\begin{figure}[t]
    \centering
    \includegraphics[width=\columnwidth]{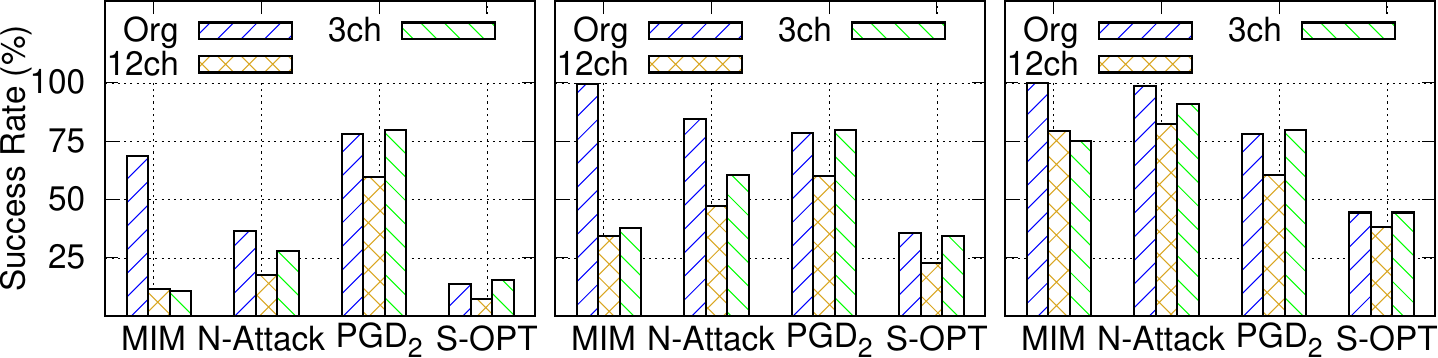}
    \caption{$l_\infty$ and $l_2$ attack success rate for different latent cardinalities of ResNet152-fc with different perturbation budgets: (left) $\epsilon = 0.003$; (center) $\epsilon = 0.01$; (right) $\epsilon = 0.03$.}
    \label{fig:ch}
\end{figure}

\subsection{Performance w.r.t Depth} \label{exp:depth}

To validate \cref{thm:2-1}, we evaluate the robustness of latent representations at different depths using two architectures: VGG16 as a shallow model and ResNet152 as a deeper model. Both architectures contain five feature extraction blocks, each comprising multiple convolutional layers that extract features at different depths. We select three splitting points for our analysis: the outputs of the first, third, and fifth blocks (denoted as Feature 0, Feature 2, and Feature 4, respectively). For a comprehensive evaluation, we employ \pgd as a white-box attack and \satk as a black-box attack, testing multiple perturbation budgets $\epsilon$.

\cref{tab:depthvgg} demonstrates that latent representations consistently exhibit greater robustness than input-level perturbations, regardless of splitting depth or perturbation strength\textemdash corroborating our findings in \cref{sec:performance_architecture}. Notably, the \gls{asr} decreases monotonically with feature depth for both VGG16 and ResNet152, indicating that deeper splitting points yield more robust distributed \dnns.

\rev{To quantify the computational trade-off associated with this robustness gain, we deployed the distributed \dnns on a realistic edge computing setup: a Jetson Orin Nano serves as the mobile device executing the head model, while a NVIDIA A100 acts as the edge server executing the tail model. \cref{tab:latency_profiling} presents the latency profiling results for batch processing of 32 inputs. Each row corresponds to a specific architecture and splitting configuration, with the ``Head" and ``Tail" columns reporting execution times on the mobile device and edge server, respectively. } 

\rev{The results reveal a clear trend: deeper splitting points increase the head execution latency, reflecting the increased computational burden on the mobile device. This empirical evidence demonstrates a fundamental trade-off in distributed \dnn design. Splitting at later layers enhances robustness against latent-space adversarial perturbations but incurs higher computational costs on resource-constrained mobile devices. Conversely, early-layer splitting minimizes the mobile computational load but leaves the system more vulnerable to adversarial attacks. These findings empirically validate the theoretical framework presented in \cref{thm:2-1}, highlighting the need for application-specific optimization when designing distributed inference systems.}

\begin{table}[t]
    \caption{\gls{asr} as a function of depth}
    \begin{center}
    \resizebox{\columnwidth}{!}
    {
    \begin{tabular}{cccccccccc}
    \toprule
     & & \multicolumn{2}{c}{Input} & \multicolumn{2}{c}{Feature 0} & \multicolumn{2}{c}{Feature 2} & \multicolumn{2}{c}{Feature 4}  \\
     & & \pgd & Square & \pgd & Square & \pgd & Square & \pgd & Square \\
    \midrule
    & $\epsilon=0.003$ & 98.8\% & 48.2\% & 50.3\% & 3.4\% & 22.3\% & 3.4\% & 6.5\% & 0.6\% \\
    & $\epsilon=0.006$ & 99.3\% & 79.1\% & 81.6\% & 9.3\% & 50.1\% & 8.6\% & 12.4\% & 1.4\% \\
    VGG16 & $\epsilon=0.009$ & 99.8\% & 91.2\% & 93.7\% & 13.4\% & 71.5\% & 12.5\% & 19.4\% & 2.5\% \\
    & $\epsilon=0.012$ & 99.7\% & 95.4\% & 96.9\% & 17.3\% & 87.2\% & 16.0\% & 27.5\% & 3.7\% \\
    & $\epsilon=0.015$ & 99.9\% & 98.8\% & 98.5\% & 22.1\% & 93.5\% & 20.0\% & 35.2\% & 4.5\% \\
    \midrule
    & $\epsilon=0.003$ & 88.1\% & 36.4\% & 65.5\% & 14.7\% & 43.1\% & 3.7\% & 1.6\% & 0.0\% \\
    & $\epsilon=0.006$ & 96.9\% & 63.3\% & 90.5\% & 31.0\% & 76.7\% & 8.4\% & 3.4\% & 0.2\% \\
    ResNet152& $\epsilon=0.009$ & 98.4\% & 81.9\% & 96.9\% & 45.7\% & 92.9\% & 13.4\% & 4.9\% & 0.3\% \\
    & $\epsilon=0.012$ & 99.0\% & 90.0\% & 99.0\% & 56.7\% & 97.4\% & 17.7\% & 6.0\% & 0.6\% \\
    & $\epsilon=0.015$ & 99.4\% & 94.7\% & 99.1\% & 68.8\% & 98.5\% & 21.9\% & 7.3\% & 0.7\% \\
    \bottomrule
    \end{tabular}
    }
    \end{center}
    \label{tab:depthvgg}
\end{table}

\begin{table}[t]
    \caption{Latency profiling of DNNs as a function of depth}
    \begin{center}
    \resizebox{\columnwidth}{!}
    {
    \begin{tabular}{cccccc}
    \toprule
     & & Head (ms) & Tail (ms) & Comm. (ms) & Overall (ms) \\
    \midrule
    & Feature 0 & 174.03 $\pm$ 17.11 & 13.42 $\pm$ 0.14 & 2211.61 $\pm$ 21.87 & 2399.06 $\pm$ 27.77 \\
    VGG16 & Feature 2 & 386.77 $\pm$ 16.16 & 5.32 $\pm$ 0.06 & 63.50 $\pm$ 1.80 & 455.59 $\pm$ 16.26 \\
    & Feature 4 & 525.59 $\pm$ 17.90 & 0.84 $\pm$ 0.01 & 1.92 $\pm$ 0.09 & 528.35 $\pm$ 17.90 \\
    \midrule
    & Feature 0 & 24.22 $\pm$ 2.88 & 23.33 $\pm$ 0.09 & 67.66 $\pm$ 2.14 & 115.21 $\pm$ 3.59 \\
    ResNet152 & Feature 2 & 217.57 $\pm$ 4.80 & 15.80 $\pm$ 0.07 & 458.26 $\pm$ 5.53 & 691.63 $\pm$ 7.32 \\
    & Feature 4 & 611.26 $\pm$ 4.85 & 0.03 $\pm$ 0.00 & 0.12 $\pm$ 0.03 & 611.41 $\pm$ 4.85 \\
    \bottomrule
    \end{tabular}
    }
    \end{center}
    \label{tab:latency_profiling}
\end{table}

\subsection{\rev{Evaluation on Other Tasks}} \label{sec:dataset}

\rev{In this experiment, we provide a comprehensive assessment of our theoretical findings across different tasks. We consider two datasets: RAVDESS \cite{livingstone2018ryerson} which is commonly used in speech emotion recognition, and RadioML \cite{o2018over} for automatic modulation classification. For both tasks we use the same neural network architecture which is summarized in Table~\ref{tab:model}. For each conv layer, it follows by a batch normalization and ReLU activation.}

\begin{wraptable}{l}{0.3\linewidth}
    \centering
    \caption{Summary of the \gls{dnn} Classifier}
    \begin{tabular}{|c|}
    \hline
        Conv 1$\times$3, 64 \\
        Conv 1$\times$3, 64 \\
        \hline
        Maxpool 1$\times$2 \\
        \hline
        Conv 1$\times$3, 128 \\
        Conv 1$\times$3, 128 \\
        \hline
        Maxpool 1$\times$2 \\
        \hline
        Conv 1$\times$3, 256 \\
        Conv 1$\times$3, 256 \\
        \hline
        Avgpool \\
        \hline
        Linear 256$\times$n \\
    \hline
    \end{tabular}
    \label{tab:model}
\end{wraptable}

\rev{For RAVDESS, we use the complete dataset and split it into training and testing sets with an 8:2 ratio. We employ an SGD optimizer with a learning rate of 0.01, weight decay of 5e-4, and momentum of 0.9. The model is trained for 200 epochs with a batch size of 32. Additionally, since RAVDESS is a small dataset, we apply label smoothing to prevent overfitting.}

\rev{For RadioML, we use only data with SNR ranging from 10 dB to 20 dB, as signals at low SNR are challenging for classifiers to identify. We split the dataset into training and testing sets using the same configuration as RAVDESS. We employ the Adam optimizer with a learning rate of 0.001 and train for 50 epochs with a batch size of 1024. We do not apply label smoothing since RadioML is a large dataset.}

\rev{After training, we split the model at the 4th conv layer and apply \fgsm with $\epsilon=0.01$ to both input and latent space. Table~\ref{tab:dataset} summarizes the classification accuracy on clean and adversarial data. The accuracy on latent adversarial samples consistently outperforms the performance on input adversarial samples across all datasets. This supports our theoretical analysis in Section~\ref{sec:theory}.}

\begin{table}[h]
    \centering
    \caption{Input vs Latent Attack for Different Datasets}
    \begin{tabular}{cccc}
    \toprule
    Dataset & Clean & Atk (input) & Atk (latent) \\
    \midrule
    RAVDESS \cite{livingstone2018ryerson} & 82.25\% & 45.73\% & 65.87\% \\
    RadioML \cite{o2018over} & 92.51\% & 58.34\% & 60.68\% \\
    \bottomrule
    \end{tabular}
    \label{tab:dataset}
\end{table}

\section{Related Work}\label{sec:rw}

\noindent\rev{\textbf{Adversarial Attacks.} Adversarial attacks can be categorized as gradient-based, score-based, and decision-based. In \textit{gradient-based scenarios}, attackers can obtain the input gradient through backpropagation and craft adversarial samples with gradient steps. \fgsm~\cite{goodfellow2014explaining} crafts adversarial samples in the $\mathit{l_{~\infty}}$ space based on the one-step input gradient sign. \bim ~\cite{kurakin2018adversarial} increases the effectiveness of \fgsm by iteratively updating adversarial samples over multiple gradient steps. \mim ~\cite{dong2018boosting} introduces momentum to iterative attacks which improves the transferability of adversarial samples. \pgd ~\cite{madry2017towards} generalizes iterative attacks to $\mathit{l_{~p}}$ space with a random start. \textit{Score-based adversaries} can only access the scores for every class given by the \dnn. \nes~\cite{ilyas2018black} applies evolutionary algorithm to estimate gradient within limit queries. \natk~\cite{li2019nattack} designs a learnable Gaussian distribution to generate effective perturbations. \satk~\cite{andriushchenko2020square} adds a localized square-shaped perturbation at a random position to the input in each iteration. \textit{Decision-based attacks} assume the adversary is only aware of the label having the highest score in the \dnn output. \eatk~\cite{dong2019efficient} minimizes perturbations with heuristic search. \sopt~\cite{cheng2019sign} accelerates the convergence with estimated gradient sign, while \tatk~\cite{wang2022triangle} minimizes perturbations in a low frequency space with the geometric property in the decision boundary.}

\noindent\textbf{Empirical and Theoretical Study of \dnn Robustness.} \cite{carlini2019evaluating} propose a set of criteria to study adversarial robustness with empirical results, and \cite{dong2020benchmarking,croce2020robustbench} evaluate the robustness of different defense approaches. However, in the absence of a theoretical framework, different studies may arrive at conflicting conclusions. For example, \cite{su2018robustness} argue that there is a trade-off between generalization and robustness while \cite{stutz2019disentangling} state that generalization does not affect the robustness. On the other hand, \gls{pac} learning has been used to theoretically analyze the adversarial robustness of \dnns ~\cite{montasser2019vc,bhattacharjee2023robust}. However, these approaches are evaluated on simple \dnns, while we evaluate our findings on state-of-the-art \dnns.

\noindent\textbf{Information Bottleneck.} \cite{tishby2015deep} first propose to use \Gls{ib} \cite{tishby2000information} to analyze \dnns and \cite{shwartz2017opening,saxe2019information} analyze the generalization and compression capability of \dnns with experiments. However, due to the challenge of mutual information estimation \cite{poole2019variational}, such studies only perform experiments on relatively small neural networks. Recent work applying \gls{ib} for robust learning \cite{alemi2016deep,kim2021distilling} 
has shown significant enhancements in experimental results. However, it does not involve a rigorous theoretical analysis for robustness of \dnns. In contrast, we provide a theoretical framework to analyze the robustness of distributed DNNs in split computing as well as provide a comprehensive experimental assessment in large scale.

\section{\rev{Limitation and Future Endeavor}} \label{sec:discuss}

\noindent\rev{\textbf{Mutual Information Estimation.} We employ \gls{ib} as our theoretical framework. However, there has been extensive debate regarding whether \gls{ib} constitutes an appropriate tool for explaining and interpreting the properties of \dnns \cite{shwartz2017opening,saxe2019information}, as accurately quantifying mutual information in high-dimensional random variables presents significant challenges. In this work, we present a small-scale mutual information estimation experiment in Section~\ref{mi_est} and evaluate the large scale experiment in Section~\ref{sec:experiments} using end-to-end performance metrics rather than mutual information. Thus, this work can be more rigorous with advanced mutual information experiments in larger scale.}

\noindent\rev{\textbf{Generalization of Theoretical Findings}. Our experiments focus exclusively on classification tasks. Investigating the framework's applicability to other non-classification tasks would be valuable for future work. Furthermore, our experiments are performed on conventionally trained neural networks while evaluation on adversarially trained models \cite{madry2017towards} or dynamically robust models \cite{zhang2024hyperadv} remains unexplored.}

\noindent\rev{\textbf{Joint Optimization of Efficiency and Robustness}. Note that the primary goal of this paper is to establish a fundamental understanding of robustness in the latent space of split computing. An important future direction is to develop an optimization framework that jointly optimizes task performance, end-to-end latency, and adversarial robustness in the latent space.}

\section{Concluding Remarks}\label{sec:conclusions}

This paper has investigated adversarial attacks to latent representations of \dnns. First, we have theoretically analyzed the robustness of latent representations based on the information bottleneck theory. To prove our theoretical findings, we have performed an extensive set of experiments with 6 different \gls{dnn} architectures, 6 different distributed \gls{dnn} approaches and considering 10 different attacks in literature. Our investigation concludes that latent representations in deeper layers are more robust than those in early layers assuming the same level of information distortion. Moreover, the adversarial robustness in latent space is jointly determined by the feature size and the end-to-end model generalization capability. We hope that this work will inspire future work on the topic of adversarial machine learning on distributed \dnns. 

\section*{Acknowledgment}

This work has been supported in part by the National Science Foundation under grants CNS-2312875 and OAC-2530896; by the Air Force Office of Scientific Research under grant FA9550-23-1-0261; by the Office of Naval Research under grant N00014-23-1-2221.

{
\bibliographystyle{elsarticle-num-names.bst}
\bibliography{bibliography,rev}

\begin{thebibliography}{65}
\expandafter\ifx\csname natexlab\endcsname\relax\def\natexlab#1{#1}\fi
\providecommand{\url}[1]{\texttt{#1}}
\providecommand{\href}[2]{#2}
\providecommand{\path}[1]{#1}
\providecommand{\DOIprefix}{doi:}
\providecommand{\ArXivprefix}{arXiv:}
\providecommand{\URLprefix}{URL: }
\providecommand{\Pubmedprefix}{pmid:}
\providecommand{\doi}[1]{\href{http://dx.doi.org/#1}{\path{#1}}}
\providecommand{\Pubmed}[1]{\href{pmid:#1}{\path{#1}}}
\providecommand{\bibinfo}[2]{#2}
\ifx\xfnm\relax \def\xfnm[#1]{\unskip,\space#1}\fi
\bibitem[{Kirillov et~al.(2023)Kirillov, Mintun, Ravi, Mao, Rolland, Gustafson, Xiao, Whitehead, Berg, Lo et~al.}]{kirillov2023segment}
\bibinfo{author}{A.~Kirillov}, \bibinfo{author}{E.~Mintun}, \bibinfo{author}{N.~Ravi}, \bibinfo{author}{H.~Mao}, \bibinfo{author}{C.~Rolland}, \bibinfo{author}{L.~Gustafson}, \bibinfo{author}{T.~Xiao}, \bibinfo{author}{S.~Whitehead}, \bibinfo{author}{A.~C. Berg}, \bibinfo{author}{W.-Y. Lo}, et~al.,
\newblock \bibinfo{title}{Segment anything},
\newblock \bibinfo{journal}{arXiv preprint arXiv:2304.02643}  (\bibinfo{year}{2023}).
\bibitem[{OpenAI(2023)}]{openai2023gpt}
\bibinfo{author}{R.~OpenAI},
\newblock \bibinfo{title}{Gpt-4 technical report},
\newblock \bibinfo{journal}{arXiv}  (\bibinfo{year}{2023}) \bibinfo{pages}{2303--08774}.
\bibitem[{Baldesi et~al.(2022)Baldesi, Restuccia, and Melodia}]{baldesi2022charm}
\bibinfo{author}{L.~Baldesi}, \bibinfo{author}{F.~Restuccia}, \bibinfo{author}{T.~Melodia},
\newblock \bibinfo{title}{{ChARM: NextG Spectrum Sharing Through Data-Driven Real-Time O-RAN Dynamic Control}},
\newblock in: \bibinfo{booktitle}{Proceedings of IEEE International Conference on Computer Communications (INFOCOM)}, \bibinfo{organization}{IEEE}, \bibinfo{year}{2022}, pp. \bibinfo{pages}{240--249}.
\bibitem[{Sandler et~al.(2018)Sandler, Howard, Zhu, Zhmoginov, and Chen}]{sandler2018mobilenetv2}
\bibinfo{author}{M.~Sandler}, \bibinfo{author}{A.~Howard}, \bibinfo{author}{M.~Zhu}, \bibinfo{author}{A.~Zhmoginov}, \bibinfo{author}{L.-C. Chen},
\newblock \bibinfo{title}{{MobileNetV2: Inverted Residuals and Linear Bottlenecks}},
\newblock in: \bibinfo{booktitle}{Proceedings of the IEEE Conference on Computer Vision and Pattern Recognition}, \bibinfo{year}{2018}, pp. \bibinfo{pages}{4510--4520}.
\bibitem[{Wang et~al.(2019)Wang, Han, Wang, Zhao, Chen, and Chen}]{wang2019edge}
\bibinfo{author}{X.~Wang}, \bibinfo{author}{Y.~Han}, \bibinfo{author}{C.~Wang}, \bibinfo{author}{Q.~Zhao}, \bibinfo{author}{X.~Chen}, \bibinfo{author}{M.~Chen},
\newblock \bibinfo{title}{{In-Edge AI: Intelligentizing Mobile Edge Computing, Caching and Communication by Federated Learning}},
\newblock \bibinfo{journal}{IEEE Network} \bibinfo{volume}{33} (\bibinfo{year}{2019}) \bibinfo{pages}{156--165}.
\bibitem[{Matsubara et~al.(2022{\natexlab{a}})Matsubara, Levorato, and Restuccia}]{matsubara2021split}
\bibinfo{author}{Y.~Matsubara}, \bibinfo{author}{M.~Levorato}, \bibinfo{author}{F.~Restuccia},
\newblock \bibinfo{title}{Split computing and early exiting for deep learning applications: Survey and research challenges},
\newblock \bibinfo{journal}{ACM Comput. Surv.} \bibinfo{volume}{55} (\bibinfo{year}{2022}{\natexlab{a}}). \URLprefix \url{https://doi.org/10.1145/3527155}. \DOIprefix\doi{10.1145/3527155}.
\bibitem[{Matsubara et~al.(2022{\natexlab{b}})Matsubara, Callegaro, Singh, Levorato, and Restuccia}]{matsubara2022bottlefit}
\bibinfo{author}{Y.~Matsubara}, \bibinfo{author}{D.~Callegaro}, \bibinfo{author}{S.~Singh}, \bibinfo{author}{M.~Levorato}, \bibinfo{author}{F.~Restuccia},
\newblock \bibinfo{title}{{BottleFit: Learning Compressed Representations in Deep Neural Networks for Effective and Efficient Split Computing}},
\newblock in: \bibinfo{booktitle}{Proceedings of IEEE International Symposium on a World of Wireless, Mobile and Multimedia Networks (WoWMoM)}, \bibinfo{year}{2022}{\natexlab{b}}.
\bibitem[{Matsubara et~al.(2019)Matsubara, Baidya, Callegaro, Levorato, and Singh}]{matsubara2019distilled}
\bibinfo{author}{Y.~Matsubara}, \bibinfo{author}{S.~Baidya}, \bibinfo{author}{D.~Callegaro}, \bibinfo{author}{M.~Levorato}, \bibinfo{author}{S.~Singh},
\newblock \bibinfo{title}{{Distilled Split Deep Neural Networks for Edge-Assisted Real-Time Systems}},
\newblock in: \bibinfo{booktitle}{Proceedings of the Workshop on Hot Topics in Video Analytics and Intelligent Edges}, \bibinfo{year}{2019}, pp. \bibinfo{pages}{21--26}.
\bibitem[{Matsubara et~al.(2020)Matsubara, Callegaro, Baidya, Levorato, and Singh}]{matsubara2020head}
\bibinfo{author}{Y.~Matsubara}, \bibinfo{author}{D.~Callegaro}, \bibinfo{author}{S.~Baidya}, \bibinfo{author}{M.~Levorato}, \bibinfo{author}{S.~Singh},
\newblock \bibinfo{title}{Head network distillation: Splitting distilled deep neural networks for resource-constrained edge computing systems},
\newblock \bibinfo{journal}{IEEE Access} \bibinfo{volume}{8} (\bibinfo{year}{2020}) \bibinfo{pages}{212177--212193}.
\bibitem[{Tishby et~al.(2000)Tishby, Pereira, and Bialek}]{tishby2000information}
\bibinfo{author}{N.~Tishby}, \bibinfo{author}{F.~C. Pereira}, \bibinfo{author}{W.~Bialek},
\newblock \bibinfo{title}{The information bottleneck method},
\newblock \bibinfo{journal}{arXiv preprint physics/0004057}  (\bibinfo{year}{2000}).
\bibitem[{Goodfellow et~al.(2014)Goodfellow, Shlens, and Szegedy}]{goodfellow2014explaining}
\bibinfo{author}{I.~J. Goodfellow}, \bibinfo{author}{J.~Shlens}, \bibinfo{author}{C.~Szegedy},
\newblock \bibinfo{title}{Explaining and harnessing adversarial examples},
\newblock \bibinfo{journal}{arXiv preprint arXiv:1412.6572}  (\bibinfo{year}{2014}).
\bibitem[{Kurakin et~al.(2018)Kurakin, Goodfellow, and Bengio}]{kurakin2018adversarial}
\bibinfo{author}{A.~Kurakin}, \bibinfo{author}{I.~J. Goodfellow}, \bibinfo{author}{S.~Bengio},
\newblock \bibinfo{title}{Adversarial examples in the physical world},
\newblock in: \bibinfo{booktitle}{Artificial intelligence safety and security}, \bibinfo{publisher}{Chapman and Hall/CRC}, \bibinfo{year}{2018}, pp. \bibinfo{pages}{99--112}.
\bibitem[{Dong et~al.(2018)Dong, Liao, Pang, Su, Zhu, Hu, and Li}]{dong2018boosting}
\bibinfo{author}{Y.~Dong}, \bibinfo{author}{F.~Liao}, \bibinfo{author}{T.~Pang}, \bibinfo{author}{H.~Su}, \bibinfo{author}{J.~Zhu}, \bibinfo{author}{X.~Hu}, \bibinfo{author}{J.~Li},
\newblock \bibinfo{title}{{Boosting Adversarial Attacks with Momentum}},
\newblock in: \bibinfo{booktitle}{{Proceedings of IEEE CVPR}}, \bibinfo{year}{2018}, pp. \bibinfo{pages}{9185--9193}.
\bibitem[{Madry et~al.(2017)Madry, Makelov, Schmidt, Tsipras, and Vladu}]{madry2017towards}
\bibinfo{author}{A.~Madry}, \bibinfo{author}{A.~Makelov}, \bibinfo{author}{L.~Schmidt}, \bibinfo{author}{D.~Tsipras}, \bibinfo{author}{A.~Vladu},
\newblock \bibinfo{title}{Towards deep learning models resistant to adversarial attacks},
\newblock \bibinfo{journal}{arXiv preprint arXiv:1706.06083}  (\bibinfo{year}{2017}).
\bibitem[{Ilyas et~al.(2018)Ilyas, Engstrom, Athalye, and Lin}]{ilyas2018black}
\bibinfo{author}{A.~Ilyas}, \bibinfo{author}{L.~Engstrom}, \bibinfo{author}{A.~Athalye}, \bibinfo{author}{J.~Lin},
\newblock \bibinfo{title}{Black-box adversarial attacks with limited queries and information},
\newblock in: \bibinfo{booktitle}{International conference on machine learning}, \bibinfo{organization}{PMLR}, \bibinfo{year}{2018}, pp. \bibinfo{pages}{2137--2146}.
\bibitem[{Li et~al.(2019)Li, Li, Wang, Zhang, and Gong}]{li2019nattack}
\bibinfo{author}{Y.~Li}, \bibinfo{author}{L.~Li}, \bibinfo{author}{L.~Wang}, \bibinfo{author}{T.~Zhang}, \bibinfo{author}{B.~Gong},
\newblock \bibinfo{title}{Nattack: Learning the distributions of adversarial examples for an improved black-box attack on deep neural networks},
\newblock in: \bibinfo{booktitle}{International Conference on Machine Learning}, \bibinfo{organization}{PMLR}, \bibinfo{year}{2019}, pp. \bibinfo{pages}{3866--3876}.
\bibitem[{Andriushchenko et~al.(2020)Andriushchenko, Croce, Flammarion, and Hein}]{andriushchenko2020square}
\bibinfo{author}{M.~Andriushchenko}, \bibinfo{author}{F.~Croce}, \bibinfo{author}{N.~Flammarion}, \bibinfo{author}{M.~Hein},
\newblock \bibinfo{title}{Square attack: a query-efficient black-box adversarial attack via random search},
\newblock in: \bibinfo{booktitle}{European conference on computer vision}, \bibinfo{organization}{Springer}, \bibinfo{year}{2020}, pp. \bibinfo{pages}{484--501}.
\bibitem[{Dong et~al.(2019)Dong, Su, Wu, Li, Liu, Zhang, and Zhu}]{dong2019efficient}
\bibinfo{author}{Y.~Dong}, \bibinfo{author}{H.~Su}, \bibinfo{author}{B.~Wu}, \bibinfo{author}{Z.~Li}, \bibinfo{author}{W.~Liu}, \bibinfo{author}{T.~Zhang}, \bibinfo{author}{J.~Zhu},
\newblock \bibinfo{title}{Efficient decision-based black-box adversarial attacks on face recognition},
\newblock in: \bibinfo{booktitle}{Proceedings of the IEEE/CVF Conference on Computer Vision and Pattern Recognition}, \bibinfo{year}{2019}, pp. \bibinfo{pages}{7714--7722}.
\bibitem[{Cheng et~al.(2019)Cheng, Singh, Chen, Chen, Liu, and Hsieh}]{cheng2019sign}
\bibinfo{author}{M.~Cheng}, \bibinfo{author}{S.~Singh}, \bibinfo{author}{P.~Chen}, \bibinfo{author}{P.-Y. Chen}, \bibinfo{author}{S.~Liu}, \bibinfo{author}{C.-J. Hsieh},
\newblock \bibinfo{title}{Sign-opt: A query-efficient hard-label adversarial attack},
\newblock \bibinfo{journal}{arXiv preprint arXiv:1909.10773}  (\bibinfo{year}{2019}).
\bibitem[{Wang et~al.(2022)Wang, Zhang, Tong, Gong, He, Li, and Liu}]{wang2022triangle}
\bibinfo{author}{X.~Wang}, \bibinfo{author}{Z.~Zhang}, \bibinfo{author}{K.~Tong}, \bibinfo{author}{D.~Gong}, \bibinfo{author}{K.~He}, \bibinfo{author}{Z.~Li}, \bibinfo{author}{W.~Liu},
\newblock \bibinfo{title}{Triangle attack: A query-efficient decision-based adversarial attack},
\newblock in: \bibinfo{booktitle}{European Conference on Computer Vision}, \bibinfo{organization}{Springer}, \bibinfo{year}{2022}, pp. \bibinfo{pages}{156--174}.
\bibitem[{Simonyan and Zisserman(2014)}]{simonyan2014very}
\bibinfo{author}{K.~Simonyan}, \bibinfo{author}{A.~Zisserman},
\newblock \bibinfo{title}{Very deep convolutional networks for large-scale image recognition},
\newblock \bibinfo{journal}{arXiv preprint arXiv:1409.1556}  (\bibinfo{year}{2014}).
\bibitem[{He et~al.(2016)He, Zhang, Ren, and Sun}]{he2016deep}
\bibinfo{author}{K.~He}, \bibinfo{author}{X.~Zhang}, \bibinfo{author}{S.~Ren}, \bibinfo{author}{J.~Sun},
\newblock \bibinfo{title}{{Deep Residual Learning for Image Recognition}},
\newblock in: \bibinfo{booktitle}{Proceedings of the IEEE Conference on Computer Vision and Pattern Recognition (CVPR)}, \bibinfo{year}{2016}, pp. \bibinfo{pages}{770--778}.
\bibitem[{Eshratifar et~al.(2019)Eshratifar, Abrishami, and Pedram}]{eshratifar2019jointdnn}
\bibinfo{author}{A.~E. Eshratifar}, \bibinfo{author}{M.~S. Abrishami}, \bibinfo{author}{M.~Pedram},
\newblock \bibinfo{title}{Jointdnn: An efficient training and inference engine for intelligent mobile cloud computing services},
\newblock \bibinfo{journal}{IEEE Transactions on Mobile Computing} \bibinfo{volume}{20} (\bibinfo{year}{2019}) \bibinfo{pages}{565--576}.
\bibitem[{Shao and Zhang(2020)}]{shao2020bottlenet++}
\bibinfo{author}{J.~Shao}, \bibinfo{author}{J.~Zhang},
\newblock \bibinfo{title}{{BottleNet++: An End-to-end Approach for Feature Compression in Device-Edge Co-Inference Systems}},
\newblock in: \bibinfo{booktitle}{Proceedings of IEEE International Conference on Communications Workshops (ICC Workshops)}, \bibinfo{organization}{IEEE}, \bibinfo{year}{2020}, pp. \bibinfo{pages}{1--6}.
\bibitem[{Singh et~al.(2020)Singh, Abu-El-Haija, Johnston, Ball{\'e}, Shrivastava, and Toderici}]{singh2020end}
\bibinfo{author}{S.~Singh}, \bibinfo{author}{S.~Abu-El-Haija}, \bibinfo{author}{N.~Johnston}, \bibinfo{author}{J.~Ball{\'e}}, \bibinfo{author}{A.~Shrivastava}, \bibinfo{author}{G.~Toderici},
\newblock \bibinfo{title}{End-to-end learning of compressible features},
\newblock in: \bibinfo{booktitle}{2020 IEEE International Conference on Image Processing (ICIP)}, \bibinfo{organization}{IEEE}, \bibinfo{year}{2020}, pp. \bibinfo{pages}{3349--3353}.
\bibitem[{Matsubara et~al.(2022{\natexlab{a}})Matsubara, Yang, Levorato, and Mandt}]{matsubara2022supervised}
\bibinfo{author}{Y.~Matsubara}, \bibinfo{author}{R.~Yang}, \bibinfo{author}{M.~Levorato}, \bibinfo{author}{S.~Mandt},
\newblock \bibinfo{title}{Supervised compression for resource-constrained edge computing systems},
\newblock in: \bibinfo{booktitle}{Proceedings of the IEEE/CVF Winter Conference on Applications of Computer Vision}, \bibinfo{year}{2022}{\natexlab{a}}, pp. \bibinfo{pages}{2685--2695}.
\bibitem[{Matsubara et~al.(2022{\natexlab{b}})Matsubara, Yang, Levorato, and Mandt}]{matsubara2022sc2}
\bibinfo{author}{Y.~Matsubara}, \bibinfo{author}{R.~Yang}, \bibinfo{author}{M.~Levorato}, \bibinfo{author}{S.~Mandt},
\newblock \bibinfo{title}{Sc2: Supervised compression for split computing},
\newblock \bibinfo{journal}{arXiv preprint arXiv:2203.08875}  (\bibinfo{year}{2022}{\natexlab{b}}).
\bibitem[{Deng et~al.(2009)Deng, Dong, Socher, Li, Li, and Fei-Fei}]{deng2009imagenet}
\bibinfo{author}{J.~Deng}, \bibinfo{author}{W.~Dong}, \bibinfo{author}{R.~Socher}, \bibinfo{author}{L.-J. Li}, \bibinfo{author}{K.~Li}, \bibinfo{author}{L.~Fei-Fei},
\newblock \bibinfo{title}{Imagenet: A large-scale hierarchical image database},
\newblock in: \bibinfo{booktitle}{2009 IEEE conference on computer vision and pattern recognition}, \bibinfo{organization}{Ieee}, \bibinfo{year}{2009}, pp. \bibinfo{pages}{248--255}.
\bibitem[{Tan et~al.(2019)Tan, Chen, Pang, Vasudevan, Sandler, Howard, and Le}]{tan2019mnasnet}
\bibinfo{author}{M.~Tan}, \bibinfo{author}{B.~Chen}, \bibinfo{author}{R.~Pang}, \bibinfo{author}{V.~Vasudevan}, \bibinfo{author}{M.~Sandler}, \bibinfo{author}{A.~Howard}, \bibinfo{author}{Q.~V. Le},
\newblock \bibinfo{title}{{MnasNet: Platform-Aware Neural Architecture Search for Mobile}},
\newblock in: \bibinfo{booktitle}{Proceedings of the IEEE Conf. on Computer Vision and Pattern Recognition}, \bibinfo{year}{2019}, pp. \bibinfo{pages}{2820--2828}.
\bibitem[{Yao et~al.(2020)Yao, Li, Liu, Wang, Liu, Shao, and Abdelzaher}]{yao2020deep}
\bibinfo{author}{S.~Yao}, \bibinfo{author}{J.~Li}, \bibinfo{author}{D.~Liu}, \bibinfo{author}{T.~Wang}, \bibinfo{author}{S.~Liu}, \bibinfo{author}{H.~Shao}, \bibinfo{author}{T.~Abdelzaher},
\newblock \bibinfo{title}{Deep compressive offloading: Speeding up neural network inference by trading edge computation for network latency},
\newblock in: \bibinfo{booktitle}{Proceedings of the 18th Conference on Embedded Networked Sensor Systems}, \bibinfo{year}{2020}, pp. \bibinfo{pages}{476--488}.
\bibitem[{Kang et~al.(2017)Kang, Hauswald, Gao, Rovinski, Mudge, Mars, and Tang}]{kang2017neurosurgeon}
\bibinfo{author}{Y.~Kang}, \bibinfo{author}{J.~Hauswald}, \bibinfo{author}{C.~Gao}, \bibinfo{author}{A.~Rovinski}, \bibinfo{author}{T.~Mudge}, \bibinfo{author}{J.~Mars}, \bibinfo{author}{L.~Tang},
\newblock \bibinfo{title}{{Neurosurgeon: Collaborative Intelligence Between the Cloud and Mobile Edge}},
\newblock \bibinfo{journal}{ACM SIGARCH Computer Architecture News} \bibinfo{volume}{45} (\bibinfo{year}{2017}) \bibinfo{pages}{615--629}.
\bibitem[{Eshratifar et~al.(2019)Eshratifar, Esmaili, and Pedram}]{eshratifar2019bottlenet}
\bibinfo{author}{A.~E. Eshratifar}, \bibinfo{author}{A.~Esmaili}, \bibinfo{author}{M.~Pedram},
\newblock \bibinfo{title}{{BottleNet: A Deep Learning Architecture for Intelligent Mobile Cloud Computing Services}},
\newblock in: \bibinfo{booktitle}{Proceedings of IEEE/ACM International Symposium on Low Power Electronics and Design (ISLPED)}, \bibinfo{organization}{IEEE}, \bibinfo{year}{2019}, pp. \bibinfo{pages}{1--6}.
\bibitem[{Zhang et~al.(2025)Zhang, Abdi, Dasari, and Restuccia}]{zhang2025semantic}
\bibinfo{author}{M.~Zhang}, \bibinfo{author}{M.~Abdi}, \bibinfo{author}{V.~R. Dasari}, \bibinfo{author}{F.~Restuccia},
\newblock \bibinfo{title}{Semantic edge computing and semantic communications in 6g networks: A unifying survey and research challenges},
\newblock \bibinfo{journal}{Computer Networks}  (\bibinfo{year}{2025}) \bibinfo{pages}{111531}.
\bibitem[{He et~al.(2019)He, Zhang, and Lee}]{he2019model}
\bibinfo{author}{Z.~He}, \bibinfo{author}{T.~Zhang}, \bibinfo{author}{R.~B. Lee},
\newblock \bibinfo{title}{Model inversion attacks against collaborative inference},
\newblock in: \bibinfo{booktitle}{Proceedings of the 35th annual computer security applications conference}, \bibinfo{year}{2019}, pp. \bibinfo{pages}{148--162}.
\bibitem[{Zhang et~al.(2024)Zhang, Abdi, Rifat, and Restuccia}]{zhang2024resilience}
\bibinfo{author}{M.~Zhang}, \bibinfo{author}{M.~Abdi}, \bibinfo{author}{S.~Rifat}, \bibinfo{author}{F.~Restuccia},
\newblock \bibinfo{title}{Resilience of entropy model in distributed neural networks},
\newblock in: \bibinfo{booktitle}{European Conference on Computer Vision}, \bibinfo{organization}{Springer}, \bibinfo{year}{2024}, pp. \bibinfo{pages}{423--440}.
\bibitem[{Shwartz-Ziv and Tishby(2017)}]{shwartz2017opening}
\bibinfo{author}{R.~Shwartz-Ziv}, \bibinfo{author}{N.~Tishby},
\newblock \bibinfo{title}{Opening the black box of deep neural networks via information},
\newblock \bibinfo{journal}{arXiv preprint arXiv:1703.00810}  (\bibinfo{year}{2017}).
\bibitem[{Rosenberg et~al.(2021)Rosenberg, Shabtai, Elovici, and Rokach}]{rosenberg2021adversarial}
\bibinfo{author}{I.~Rosenberg}, \bibinfo{author}{A.~Shabtai}, \bibinfo{author}{Y.~Elovici}, \bibinfo{author}{L.~Rokach},
\newblock \bibinfo{title}{Adversarial machine learning attacks and defense methods in the cyber security domain},
\newblock \bibinfo{journal}{ACM Computing Surveys (CSUR)} \bibinfo{volume}{54} (\bibinfo{year}{2021}) \bibinfo{pages}{1--36}.
\bibitem[{Adesina et~al.(2022)Adesina, Hsieh, Sagduyu, and Qian}]{adesina2022adversarial}
\bibinfo{author}{D.~Adesina}, \bibinfo{author}{C.-C. Hsieh}, \bibinfo{author}{Y.~E. Sagduyu}, \bibinfo{author}{L.~Qian},
\newblock \bibinfo{title}{Adversarial machine learning in wireless communications using rf data: A review},
\newblock \bibinfo{journal}{IEEE Communications Surveys \& Tutorials}  (\bibinfo{year}{2022}).
\bibitem[{Tishby and Zaslavsky(2015)}]{tishby2015deep}
\bibinfo{author}{N.~Tishby}, \bibinfo{author}{N.~Zaslavsky},
\newblock \bibinfo{title}{Deep learning and the information bottleneck principle},
\newblock in: \bibinfo{booktitle}{2015 ieee information theory workshop (itw)}, \bibinfo{organization}{IEEE}, \bibinfo{year}{2015}, pp. \bibinfo{pages}{1--5}.
\bibitem[{Cover(1999)}]{cover1999elements}
\bibinfo{author}{T.~M. Cover}, \bibinfo{title}{{Elements of Information Theory}}, \bibinfo{publisher}{John Wiley \& Sons}, \bibinfo{year}{1999}.
\bibitem[{Shamir et~al.(2010)Shamir, Sabato, and Tishby}]{shamir2010learning}
\bibinfo{author}{O.~Shamir}, \bibinfo{author}{S.~Sabato}, \bibinfo{author}{N.~Tishby},
\newblock \bibinfo{title}{Learning and generalization with the information bottleneck},
\newblock \bibinfo{journal}{Theoretical Computer Science} \bibinfo{volume}{411} (\bibinfo{year}{2010}) \bibinfo{pages}{2696--2711}.
\bibitem[{Simon-Gabriel et~al.(2019)Simon-Gabriel, Ollivier, Bottou, Sch{\"o}lkopf, and Lopez-Paz}]{simon2019first}
\bibinfo{author}{C.-J. Simon-Gabriel}, \bibinfo{author}{Y.~Ollivier}, \bibinfo{author}{L.~Bottou}, \bibinfo{author}{B.~Sch{\"o}lkopf}, \bibinfo{author}{D.~Lopez-Paz},
\newblock \bibinfo{title}{First-order adversarial vulnerability of neural networks and input dimension},
\newblock in: \bibinfo{booktitle}{International conference on machine learning}, \bibinfo{organization}{PMLR}, \bibinfo{year}{2019}, pp. \bibinfo{pages}{5809--5817}.
\bibitem[{Su et~al.(2018)Su, Zhang, Chen, Yi, Chen, and Gao}]{su2018robustness}
\bibinfo{author}{D.~Su}, \bibinfo{author}{H.~Zhang}, \bibinfo{author}{H.~Chen}, \bibinfo{author}{J.~Yi}, \bibinfo{author}{P.-Y. Chen}, \bibinfo{author}{Y.~Gao},
\newblock \bibinfo{title}{Is robustness the cost of accuracy?--a comprehensive study on the robustness of 18 deep image classification models},
\newblock in: \bibinfo{booktitle}{Proceedings of the European conference on computer vision (ECCV)}, \bibinfo{year}{2018}, pp. \bibinfo{pages}{631--648}.
\bibitem[{Tsipras et~al.(2018)Tsipras, Santurkar, Engstrom, Turner, and Madry}]{tsipras2018robustness}
\bibinfo{author}{D.~Tsipras}, \bibinfo{author}{S.~Santurkar}, \bibinfo{author}{L.~Engstrom}, \bibinfo{author}{A.~Turner}, \bibinfo{author}{A.~Madry},
\newblock \bibinfo{title}{Robustness may be at odds with accuracy},
\newblock \bibinfo{journal}{arXiv preprint arXiv:1805.12152}  (\bibinfo{year}{2018}).
\bibitem[{Hong et~al.(2020)Hong, Kaya, Modoranu, and Dumitra{\c{s}}}]{hong2020panda}
\bibinfo{author}{S.~Hong}, \bibinfo{author}{Y.~Kaya}, \bibinfo{author}{I.-V. Modoranu}, \bibinfo{author}{T.~Dumitra{\c{s}}},
\newblock \bibinfo{title}{A panda? no, it's a sloth: Slowdown attacks on adaptive multi-exit neural network inference},
\newblock \bibinfo{journal}{arXiv preprint arXiv:2010.02432}  (\bibinfo{year}{2020}).
\bibitem[{Saxe et~al.(2019)Saxe, Bansal, Dapello, Advani, Kolchinsky, Tracey, and Cox}]{saxe2019information}
\bibinfo{author}{A.~M. Saxe}, \bibinfo{author}{Y.~Bansal}, \bibinfo{author}{J.~Dapello}, \bibinfo{author}{M.~Advani}, \bibinfo{author}{A.~Kolchinsky}, \bibinfo{author}{B.~D. Tracey}, \bibinfo{author}{D.~D. Cox},
\newblock \bibinfo{title}{On the information bottleneck theory of deep learning},
\newblock \bibinfo{journal}{Journal of Statistical Mechanics: Theory and Experiment} \bibinfo{volume}{2019} (\bibinfo{year}{2019}) \bibinfo{pages}{124020}.
\bibitem[{Belghazi et~al.(2018)Belghazi, Baratin, Rajeshwar, Ozair, Bengio, Courville, and Hjelm}]{belghazi2018mutual}
\bibinfo{author}{M.~I. Belghazi}, \bibinfo{author}{A.~Baratin}, \bibinfo{author}{S.~Rajeshwar}, \bibinfo{author}{S.~Ozair}, \bibinfo{author}{Y.~Bengio}, \bibinfo{author}{A.~Courville}, \bibinfo{author}{D.~Hjelm},
\newblock \bibinfo{title}{Mutual information neural estimation},
\newblock in: \bibinfo{booktitle}{International conference on machine learning}, \bibinfo{organization}{PMLR}, \bibinfo{year}{2018}, pp. \bibinfo{pages}{531--540}.
\bibitem[{Nguyen et~al.(2010)Nguyen, Wainwright, and Jordan}]{nguyen2010estimating}
\bibinfo{author}{X.~Nguyen}, \bibinfo{author}{M.~J. Wainwright}, \bibinfo{author}{M.~I. Jordan},
\newblock \bibinfo{title}{Estimating divergence functionals and the likelihood ratio by convex risk minimization},
\newblock \bibinfo{journal}{IEEE Transactions on Information Theory} \bibinfo{volume}{56} (\bibinfo{year}{2010}) \bibinfo{pages}{5847--5861}.
\bibitem[{Oord et~al.(2018)Oord, Li, and Vinyals}]{oord2018representation}
\bibinfo{author}{A.~v.~d. Oord}, \bibinfo{author}{Y.~Li}, \bibinfo{author}{O.~Vinyals},
\newblock \bibinfo{title}{Representation learning with contrastive predictive coding},
\newblock \bibinfo{journal}{arXiv preprint arXiv:1807.03748}  (\bibinfo{year}{2018}).
\bibitem[{Cheng et~al.(2020)Cheng, Hao, Dai, Liu, Gan, and Carin}]{cheng2020club}
\bibinfo{author}{P.~Cheng}, \bibinfo{author}{W.~Hao}, \bibinfo{author}{S.~Dai}, \bibinfo{author}{J.~Liu}, \bibinfo{author}{Z.~Gan}, \bibinfo{author}{L.~Carin},
\newblock \bibinfo{title}{Club: A contrastive log-ratio upper bound of mutual information},
\newblock in: \bibinfo{booktitle}{International conference on machine learning}, \bibinfo{organization}{PMLR}, \bibinfo{year}{2020}, pp. \bibinfo{pages}{1779--1788}.
\bibitem[{McAllester and Stratos(2020)}]{mcallester2020formal}
\bibinfo{author}{D.~McAllester}, \bibinfo{author}{K.~Stratos},
\newblock \bibinfo{title}{Formal limitations on the measurement of mutual information},
\newblock in: \bibinfo{booktitle}{International Conference on Artificial Intelligence and Statistics}, \bibinfo{organization}{PMLR}, \bibinfo{year}{2020}, pp. \bibinfo{pages}{875--884}.
\bibitem[{Alvar and Baji{\'c}(2021)}]{alvar2021pareto}
\bibinfo{author}{S.~R. Alvar}, \bibinfo{author}{I.~V. Baji{\'c}},
\newblock \bibinfo{title}{Pareto-optimal bit allocation for collaborative intelligence},
\newblock \bibinfo{journal}{IEEE Transactions on Image Processing} \bibinfo{volume}{30} (\bibinfo{year}{2021}) \bibinfo{pages}{3348--3361}.
\bibitem[{Athalye et~al.(2018)Athalye, Carlini, and Wagner}]{athalye2018obfuscated}
\bibinfo{author}{A.~Athalye}, \bibinfo{author}{N.~Carlini}, \bibinfo{author}{D.~Wagner},
\newblock \bibinfo{title}{Obfuscated gradients give a false sense of security: Circumventing defenses to adversarial examples},
\newblock in: \bibinfo{booktitle}{International conference on machine learning}, \bibinfo{organization}{PMLR}, \bibinfo{year}{2018}, pp. \bibinfo{pages}{274--283}.
\bibitem[{Livingstone and Russo(2018)}]{livingstone2018ryerson}
\bibinfo{author}{S.~R. Livingstone}, \bibinfo{author}{F.~A. Russo},
\newblock \bibinfo{title}{The ryerson audio-visual database of emotional speech and song (ravdess): A dynamic, multimodal set of facial and vocal expressions in north american english},
\newblock \bibinfo{journal}{PloS one} \bibinfo{volume}{13} (\bibinfo{year}{2018}) \bibinfo{pages}{e0196391}.
\bibitem[{O’Shea et~al.(2018)O’Shea, Roy, and Clancy}]{o2018over}
\bibinfo{author}{T.~J. O’Shea}, \bibinfo{author}{T.~Roy}, \bibinfo{author}{T.~C. Clancy},
\newblock \bibinfo{title}{Over-the-air deep learning based radio signal classification},
\newblock \bibinfo{journal}{IEEE Journal of Selected Topics in Signal Processing} \bibinfo{volume}{12} (\bibinfo{year}{2018}) \bibinfo{pages}{168--179}.
\bibitem[{Carlini et~al.(2019)Carlini, Athalye, Papernot, Brendel, Rauber, Tsipras, Goodfellow, Madry, and Kurakin}]{carlini2019evaluating}
\bibinfo{author}{N.~Carlini}, \bibinfo{author}{A.~Athalye}, \bibinfo{author}{N.~Papernot}, \bibinfo{author}{W.~Brendel}, \bibinfo{author}{J.~Rauber}, \bibinfo{author}{D.~Tsipras}, \bibinfo{author}{I.~Goodfellow}, \bibinfo{author}{A.~Madry}, \bibinfo{author}{A.~Kurakin},
\newblock \bibinfo{title}{On evaluating adversarial robustness},
\newblock \bibinfo{journal}{arXiv preprint arXiv:1902.06705}  (\bibinfo{year}{2019}).
\bibitem[{Dong et~al.(2020)Dong, Fu, Yang, Pang, Su, Xiao, and Zhu}]{dong2020benchmarking}
\bibinfo{author}{Y.~Dong}, \bibinfo{author}{Q.-A. Fu}, \bibinfo{author}{X.~Yang}, \bibinfo{author}{T.~Pang}, \bibinfo{author}{H.~Su}, \bibinfo{author}{Z.~Xiao}, \bibinfo{author}{J.~Zhu},
\newblock \bibinfo{title}{Benchmarking adversarial robustness on image classification},
\newblock in: \bibinfo{booktitle}{proceedings of the IEEE/CVF conference on computer vision and pattern recognition}, \bibinfo{year}{2020}, pp. \bibinfo{pages}{321--331}.
\bibitem[{Croce et~al.(2020)Croce, Andriushchenko, Sehwag, Debenedetti, Flammarion, Chiang, Mittal, and Hein}]{croce2020robustbench}
\bibinfo{author}{F.~Croce}, \bibinfo{author}{M.~Andriushchenko}, \bibinfo{author}{V.~Sehwag}, \bibinfo{author}{E.~Debenedetti}, \bibinfo{author}{N.~Flammarion}, \bibinfo{author}{M.~Chiang}, \bibinfo{author}{P.~Mittal}, \bibinfo{author}{M.~Hein},
\newblock \bibinfo{title}{Robustbench: a standardized adversarial robustness benchmark},
\newblock \bibinfo{journal}{arXiv preprint arXiv:2010.09670}  (\bibinfo{year}{2020}).
\bibitem[{Stutz et~al.(2019)Stutz, Hein, and Schiele}]{stutz2019disentangling}
\bibinfo{author}{D.~Stutz}, \bibinfo{author}{M.~Hein}, \bibinfo{author}{B.~Schiele},
\newblock \bibinfo{title}{Disentangling adversarial robustness and generalization},
\newblock in: \bibinfo{booktitle}{Proceedings of the IEEE/CVF Conference on Computer Vision and Pattern Recognition}, \bibinfo{year}{2019}, pp. \bibinfo{pages}{6976--6987}.
\bibitem[{Montasser et~al.(2019)Montasser, Hanneke, and Srebro}]{montasser2019vc}
\bibinfo{author}{O.~Montasser}, \bibinfo{author}{S.~Hanneke}, \bibinfo{author}{N.~Srebro},
\newblock \bibinfo{title}{Vc classes are adversarially robustly learnable, but only improperly},
\newblock in: \bibinfo{booktitle}{Conference on Learning Theory}, \bibinfo{organization}{PMLR}, \bibinfo{year}{2019}, pp. \bibinfo{pages}{2512--2530}.
\bibitem[{Bhattacharjee et~al.(2023)Bhattacharjee, Hopkins, Kumar, Yu, and Chaudhuri}]{bhattacharjee2023robust}
\bibinfo{author}{R.~Bhattacharjee}, \bibinfo{author}{M.~Hopkins}, \bibinfo{author}{A.~Kumar}, \bibinfo{author}{H.~Yu}, \bibinfo{author}{K.~Chaudhuri},
\newblock \bibinfo{title}{Robust empirical risk minimization with tolerance},
\newblock in: \bibinfo{booktitle}{International Conference on Algorithmic Learning Theory}, \bibinfo{organization}{PMLR}, \bibinfo{year}{2023}, pp. \bibinfo{pages}{182--203}.
\bibitem[{Poole et~al.(2019)Poole, Ozair, Van Den~Oord, Alemi, and Tucker}]{poole2019variational}
\bibinfo{author}{B.~Poole}, \bibinfo{author}{S.~Ozair}, \bibinfo{author}{A.~Van Den~Oord}, \bibinfo{author}{A.~Alemi}, \bibinfo{author}{G.~Tucker},
\newblock \bibinfo{title}{On variational bounds of mutual information},
\newblock in: \bibinfo{booktitle}{International Conference on Machine Learning}, \bibinfo{organization}{PMLR}, \bibinfo{year}{2019}, pp. \bibinfo{pages}{5171--5180}.
\bibitem[{Alemi et~al.(2016)Alemi, Fischer, Dillon, and Murphy}]{alemi2016deep}
\bibinfo{author}{A.~A. Alemi}, \bibinfo{author}{I.~Fischer}, \bibinfo{author}{J.~V. Dillon}, \bibinfo{author}{K.~Murphy},
\newblock \bibinfo{title}{Deep variational information bottleneck},
\newblock \bibinfo{journal}{arXiv preprint arXiv:1612.00410}  (\bibinfo{year}{2016}).
\bibitem[{Kim et~al.(2021)Kim, Lee, and Ro}]{kim2021distilling}
\bibinfo{author}{J.~Kim}, \bibinfo{author}{B.-K. Lee}, \bibinfo{author}{Y.~M. Ro},
\newblock \bibinfo{title}{Distilling robust and non-robust features in adversarial examples by information bottleneck},
\newblock \bibinfo{journal}{Advances in Neural Information Processing Systems} \bibinfo{volume}{34} (\bibinfo{year}{2021}) \bibinfo{pages}{17148--17159}.
\bibitem[{Zhang et~al.(2024)Zhang, De~Lucia, Swami, Ashdown, Turck, and Restuccia}]{zhang2024hyperadv}
\bibinfo{author}{M.~Zhang}, \bibinfo{author}{M.~De~Lucia}, \bibinfo{author}{A.~Swami}, \bibinfo{author}{J.~Ashdown}, \bibinfo{author}{K.~Turck}, \bibinfo{author}{F.~Restuccia},
\newblock \bibinfo{title}{Hyperadv: Dynamic defense against adversarial radio frequency machine learning systems},
\newblock in: \bibinfo{booktitle}{MILCOM 2024-2024 IEEE Military Communications Conference (MILCOM)}, \bibinfo{organization}{IEEE}, \bibinfo{year}{2024}, pp. \bibinfo{pages}{821--826}.

\end{thebibliography}
}

\end{document}